\documentclass[letterpaper]{article} 
\usepackage{aaai2026}  
\usepackage{times}  
\usepackage{helvet}  
\usepackage{courier}  
\usepackage[hyphens]{url}  
\usepackage{graphicx} 
\usepackage{natbib}  
\usepackage{caption} 
\usepackage{algorithm}

\usepackage{booktabs}
\usepackage{multirow}
\usepackage{amsmath,amssymb,amsthm}
\usepackage{microtype}
\usepackage{graphicx}
\usepackage{algorithm,algpseudocode}
\usepackage{subcaption}

\newtheorem{definition}{Definition}
\newtheorem{proposition}{Proposition}
\newtheorem{remark}{Remark}

\newcommand{\PCER}{\mathrm{PCER}}
\newcommand{\ovfit}{\delta}            
\newcommand{\posrate}{R}              
\newcommand{\dpe}{\varepsilon}        
\newcommand{\floor}{\eta}             

\usepackage{newfloat}
\usepackage{listings}
\DeclareCaptionStyle{ruled}{labelfont=normalfont,labelsep=colon,strut=off} 
\floatstyle{ruled}
\newfloat{listing}{tb}{lst}{}
\floatname{listing}{Listing}
\title{Privacy Cost as Equity Input: A Group Fairness Criterion for \\ Differentially Private Machine Learning}
\author{
    Rakshit Naidu
}
\affiliations{
    Georgia Institute of Technology\\

    rakshitnaidu@gatech.edu
}

\usepackage{bibentry}

\begin{document}

\maketitle

\begin{abstract}
Differential privacy (DP) is increasingly deployed to limit membership
inference risk in machine-learning systems. Prior work has shown that
DP-SGD can widen accuracy disparities across demographic groups, but
this framing treats fairness as a purely outcome-side concern. In this paper, we argue
that privacy cost (denoted as the information leakage borne by each group) is
itself a form of harm, and adopt a compensatory-fairness framework in which
a group that involuntarily bears greater privacy exposure is owed
proportionally greater benefit from the system; we discuss this as one
normative position among several (Section~\ref{sec:normative}), not the
unique correct standard. From this principle we derive the \emph{Privacy-Cost Equity Ratio}
(PCER), a group fairness metric defined as a group's positive prediction
rate normalized by its per-group overfitting gap. By a standard
membership inference bound, this overfitting gap upper-bounds each
group's vulnerability to inference attacks, making PCER a conservative
measure of benefit relative to exposure. Crucially, PCER requires only
per-group train and test accuracy (no shadow models required) making it a
practical post-hoc audit tool.

Here, we evaluate PCER alongside standard fairness metrics across six
benchmark--attribute combinations spanning tabular and NLP domains,
under DP-SGD at a range of privacy budgets, and validate the overfitting-gap
proxy against a direct per-group threshold membership-inference attack. The
results reveal patterns that outcome-based metrics miss. On Adult Income
(race), PCER inverts the fairness ranking relative to standard metrics: the
privacy budget deemed most equitable by outcome gaps is least equitable by
exposure-adjusted ones. On COMPAS, PCER uncovers a persistent double
disadvantage: the protected group bears both greater privacy exposure
and worse predictive outcomes, something demographic parity gap masks
entirely. Sensitivity analysis further shows that very strong privacy
guarantees collapse both groups' overfitting to a numerical floor,
rendering exposure-based audits uninformative in that regime. Taken
together, these findings argue that fairness audits of privacy-preserving
systems must account for who bears the cost of protection, not only who
benefits from its outcomes.
\end{abstract}

\section{Introduction}
\label{sec:intro}

As machine-learning systems increasingly inform high-stakes
decisions (creditworthiness, criminal recidivism prediction, income
classification), practitioners face mounting pressure to ensure these
systems are simultaneously \emph{fair} and \emph{privacy-preserving}.
Fairness is typically audited through outcome metrics such as demographic
parity gap (DP-Gap) or equalized odds (EO-Gap); privacy is enforced
through differential privacy (DP)~\cite{dwork2006calibrating}, with
DP-SGD~\cite{abadi2016deep} the dominant training-time mechanism against
membership inference attacks~\cite{shokri2017membership}. The implicit
assumption is that satisfying each demand independently suffices: a model
passing a fairness audit \emph{and} a certified DP budget is an acceptable
deployment. We argue this is incomplete: the critical question is not
whether a system is fair \emph{or} private in isolation, but whether the
\emph{distribution of privacy costs} across groups is itself equitable.

A growing literature documents that DP-SGD hurts minority-group accuracy
more than majority-group accuracy~\cite{bagdasaryan2019differential,Uniyal2021DPSGDVP,tran2021differentially},
typically framed as a fairness problem because \emph{outcomes} worsen
unevenly. But this framing treats privacy as a design constraint rather
than a cost borne by individuals: when a membership inference attack
succeeds on a training-set member, that person pays an \emph{involuntary}
privacy cost they did not consent to, and if some groups are
systematically more exposed, they bear a larger share of it.
~\citep{Kulynych2019DisparateVT} confirm empirically that MIA success
rates vary systematically across groups under DP-SGD, so privacy exposure
is a measurable, non-uniform quantity, not the uniform guarantee DP's
$\dpe$ might suggest.

Proportionality-based fairness criteria~\cite{binns2018fairness} provide
one normative lens to make this precise, among several we contrast in
Section~\ref{sec:normative} (including equality-of-treatment and Rawlsian
maximin alternatives): a system is equitable when the ratio of prediction
benefit to involuntary privacy cost is equal across groups, so that a
group contributing more privacy exposure receives proportionally greater
benefit as compensation. This is a substantive ethical commitment, not a
mathematical necessity, and we adopt it because it is well suited to
\emph{retroactive} audits of already-incurred, involuntary harm; readers
who favor a different normative starting point can still use the
overfitting-gap measurements PCER produces to instantiate their own
criterion. Under this lens, unequal benefit-to-cost ratios are invisible
to standard fairness metrics, which check only whether outcome rates are
equal, not whether they are proportional to involuntary exposure.

The practical consequences of this audit gap are significant. Deployers
certifying both a fairness audit (DP-Gap) and a privacy audit ($\dpe$) may
be unaware that one demographic group bears a systematically higher
fraction of realized privacy cost, invisible to either metric alone:
DP-Gap ignores exposure asymmetry, and equalized odds treats privacy as a
separate concern entirely. A budget ranking optimal under demographic
parity can simultaneously impose a disproportionate privacy burden on a
historically disadvantaged group. As regulation increasingly demands
auditable fairness evidence (the EU AI Act mandates bias monitoring for
high-risk systems~\cite{nist2025privacy}), the absence of a privacy-cost
equity metric leaves a concrete, actionable gap practitioners cannot
currently fill.

Figure~\ref{fig:motivation} illustrates why these considerations matter in
practice. On Adult Income (race, panel~a), choosing $\dpe \in \{5,10\}$
minimises DP-Gap, the standard fairness criterion, yet these are \emph{not}
the most equitable budgets by the proportionality-based metric we introduce,
which is minimised at no-DP. The two metrics recommend different deployment decisions.
On COMPAS (race, panel~b), DP-Gap is nearly flat across all budgets,
suggesting DP provides little additional fairness benefit at any $\dpe$. But
the proportionality metric is persistently and significantly negative
(95\% bootstrap CI excludes zero at $\dpe \ge 1$, 50 seeds), revealing a
double-disadvantage that African-Americans bear higher membership-inference exposure \emph{and} receive a lower benefit-to-cost ratio than the comparison
group, something that DP-Gap misses entirely.

\begin{figure*}[t]
  \centering
  \begin{subfigure}[t]{0.48\textwidth}
    \includegraphics[width=\textwidth]{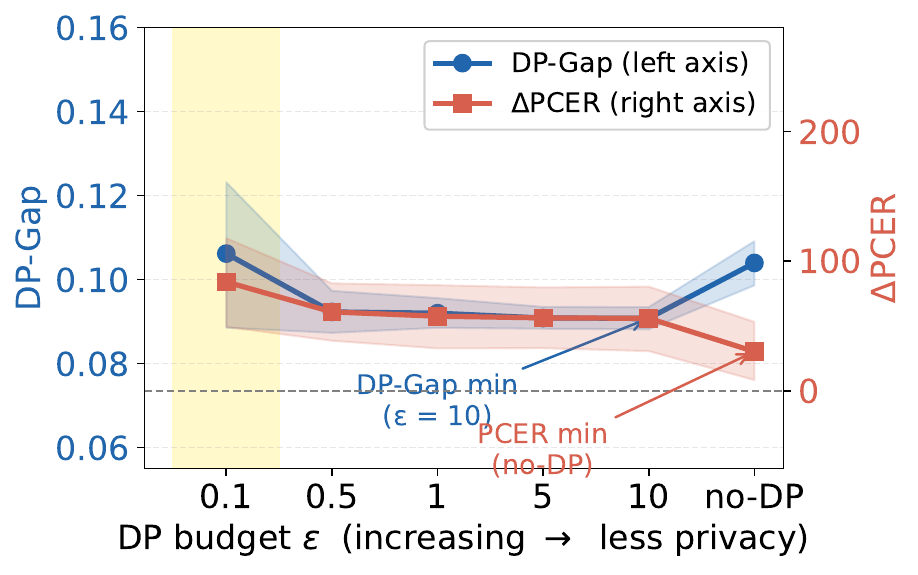}
    \subcaption{Adult Income (race), 50 seeds, 95\% CI.
      DP-Gap (blue, left axis) is minimised at $\dpe \in \{5,10\}$. Standard fairness 
      criteria select these as the most equitable budgets. $\Delta_\PCER$
      (red, right axis) is minimised at no-DP. The two metrics produce conflicting budget recommendations.}
    \label{fig:motivation:a}
  \end{subfigure}
  \hfill
  \begin{subfigure}[t]{0.48\textwidth}
    \includegraphics[width=\textwidth]{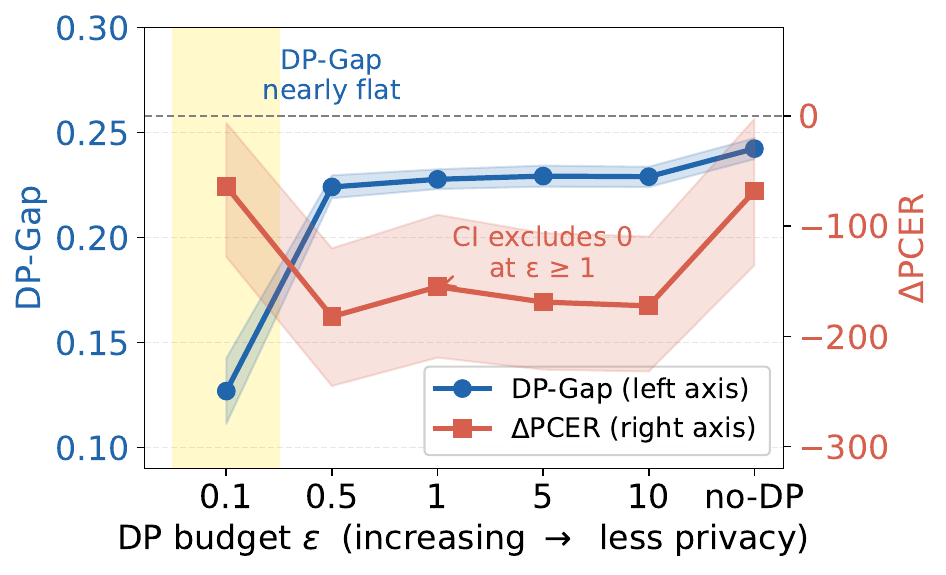}
    \subcaption{COMPAS (race), 50 seeds, 95\% CI.
      DP-Gap (blue) is nearly flat across all DP budgets.
      $\Delta_\PCER$ (red) is persistently negative with CI excluding zero
      at $\dpe \ge 1$: a double-disadvantage i.e. higher harmful-prediction rate
      \emph{and} higher overfitting exposure for African-Americans. This is invisible
      to DP-Gap.}
    \label{fig:motivation:b}
  \end{subfigure}
  \caption{Two findings that standard fairness metrics cannot detect. 
    Each panel shows DP-Gap and $\Delta_\PCER$ (with 95\% bootstrap CI)
    as a function of DP budget $\dpe$, with no-DP ($\dpe=\infty$) on
    the right. Yellow shading marks the floor-dominated regime ($\dpe=0.1$).}
  \label{fig:motivation}
\end{figure*}

In this paper, we close this gap by introducing the \emph{Privacy-Cost
Equity Ratio} (PCER), a group fairness metric operationalizing
harm-proportionality via Adams' equity-ratio structure~\cite{adams1963inequity}:
the ratio of a group's positive prediction rate to its per-group
overfitting gap $\delta_g = \max(0, \mathrm{acc}_g^{\mathrm{tr}} -
\mathrm{acc}_g^{\mathrm{te}})$, which by the Yeom et al.~\cite{yeom2018privacy}
bound upper-bounds per-group membership inference advantage, making it a
theoretically grounded, shadow-model-free privacy-cost proxy. A system is
PCER-equitable when this ratio is equal across groups. We evaluate PCER
against standard metrics on four benchmark datasets (six
benchmark-attribute combinations) under DP-SGD, validate the
overfitting-gap proxy against a direct threshold membership-inference
attack, provide a floor-constant sensitivity analysis, and package the
procedure as a deployable post-hoc audit (Algorithm~\ref{alg:audit}).

Our contributions are as follows:
\begin{enumerate}
  \item We formally define \emph{Privacy-Cost Equity Ratio} (PCER) using the per-group overfitting gap as the
    privacy-cost proxy, grounded in the Yeom et al.\ bound connecting
    overfitting to MIA advantage, and show PCER is normatively distinct from
    demographic parity and equalized odds (Section~\ref{sec:method}).

    \item We establish three theoretical results that ground PCER's uniqueness
    and structural properties (Section~\ref{sec:method}): an \emph{axiomatic
    characterization} showing $\posrate_g/\ovfit_g$ is the
    unique benefit-to-cost mapping satisfying linearity in benefit and inverse
    linearity in cost (Proposition~\ref{prop:axioms} (Appendix)); a \emph{floor degeneracy theorem}
    (Proposition~\ref{prop:floor-degenerate}) proving that PCER reduces to
    scaled DP-Gap precisely when both groups' overfitting gaps fall below the
    floor; and a \emph{PCER--DP-Gap incompatibility theorem}(Proposition~\ref{prop:incompatible}) showing the two fairness criteria
    cannot hold simultaneously unless per-group overfitting gaps are
    equal, a condition DP-SGD does not guarantee.

  \item We demonstrate empirically on Adult Income, COMPAS, ACS Income 2018,
    and Bias in Bios (NLP domain; 50 seeds each; six benchmark-attribute
    combinations) that PCER diverges from
    DP-Gap and EO-Gap at moderate DP budgets ($\dpe \in \{1,5,10\}$), with
    ranking disagreements robust across floor choices and with bootstrap CIs
    excluding zero (Section~\ref{sec:experiments}).

  \item We conduct a floor-constant sensitivity analysis
    ($\floor \in \{10^{-4}, 10^{-3}, 10^{-2}, 10^{-1}\}$) that shows PCER
    findings at strong DP ($\dpe=0.1$) are floor-dominated artifacts, while
    findings at $\dpe \ge 1$ are robust to floor choice
    (Section~\ref{sec:results}).

  \item We empirically validate the overfitting-gap proxy against a direct,
    per-group threshold membership-inference attack across all six
    benchmark-attribute combinations and DP budgets, finding strong
    directional agreement (Section~\ref{sec:mia-validation}).
\end{enumerate}

\medskip
\noindent\textbf{Practical significance.}
Three properties make PCER a viable addition to existing audit pipelines.
First, it requires only per-group train/test accuracy (no shadow models),
so its marginal cost over an existing DP-SGD evaluation pipeline is
negligible compared to LiRA~\cite{carlini2022membership}-style approaches
requiring hundreds of shadow-model trainings. Second, PCER is the only
audit tool that detects \textbf{double disadvantage} (a group bearing
both higher privacy exposure \emph{and} worse predictive outcomes). DP-Gap
cannot surface this because it has no denominator sensitive to
differential memorization (Proposition~\ref{prop:incompatible}). Third,
Proposition~\ref{prop:incompatible} gives auditors a falsifiable claim:
any system asserting compliance with both a DP-Gap and a PCER audit is
implicitly claiming equal per-group overfitting, an assertion
Algorithm~\ref{alg:audit} can confirm or refute with no additional data
collection.



\section{Background and Related Work}
\label{sec:background}

\subsection{Adams' Equity Theory and Proportionality-Based Fairness}
Adams~\cite{adams1963inequity} proposed that individuals evaluate fairness by
comparing their own outcome-to-input ratio to a referent's:
\[
  \frac{\text{outcome}_i}{\text{input}_i} \;=\; \frac{\text{outcome}_j}{\text{input}_j}.
\]
Inequitable ratios produce tension motivating corrective action. The theory
was originally framed around voluntary inputs (effort, skill), but
~\citep{binns2018fairness} observes that Adams' proportionality structure
motivates fairness criteria in which the ``input'' need not be voluntary:
any systematic cost borne by a party can ground a proportionality claim.
~\citep{speicher2018unified} similarly propose a family of metrics unified
by an ``individual goodness'' principle rewarding outcome-merit alignment.
These accounts identify the proportionality requirement but stop short of
specifying which cost should enter the denominator in an ML context; to
our knowledge, PCER is the first to instantiate Adams' broader reading
with involuntary privacy exposure as that cost.

\subsection{Differential Privacy and DP-SGD}
A randomized mechanism $\mathcal{M}$ satisfies $(\dpe,\delta)$-differential
privacy~\cite{dwork2006calibrating} if for any neighboring datasets $D, D'$
and output set $S$:
\[
  \Pr[\mathcal{M}(D) \in S] \;\le\; e^{\dpe}\,\Pr[\mathcal{M}(D') \in S] + \delta.
\]
DP-SGD~\cite{abadi2016deep} achieves DP for neural-network training by
clipping per-sample gradients to norm $C$ and adding Gaussian noise
$\mathcal{N}(0,\sigma^2 C^2 I)$ at each step, with a privacy accountant
tracking the cumulative $(\dpe,\delta)$ budget. A growing literature
documents that DP-SGD does not affect all groups equally: DP noise
disproportionately degrades accuracy for underrepresented groups~\cite{bagdasaryan2019differential},
motivating fair DP training that constrains equalized-odds violations
within a budget~\cite{tran2021differentially}, and characterizations of
the accuracy--fairness--privacy trilemma~\cite{jagielski2019differentially}.
These works treat fairness as an outcome-rate property; PCER instead asks
whether DP budgets are equitable with respect to the \emph{costs} DP
imposes on different groups.

\subsection{Membership Inference and Overfitting}
A membership inference attack (MIA)~\cite{shokri2017membership} determines
whether a target record was in the training set. The canonical threshold
attack~\cite{yeom2018privacy} scores by per-sample loss (low loss
indicates memorization) and its advantage is bounded above by the
generalization gap, connecting overfitting directly to privacy leakage. We
apply this bound per group, yielding an auditable, shadow-model-free bound
on group-level MIA advantage.
~\citep{Kulynych2019DisparateVT} provide complementary empirical evidence
that DP mechanisms expose demographic groups to different membership
inference risk under the same nominal $\varepsilon$, persisting across DP
budgets, motivating a criterion that normalizes benefit by per-group
privacy cost. We adopt the overfitting gap for this; stronger attacks such
as LiRA~\cite{carlini2022membership} can tighten the bound but require
expensive shadow-model training not necessary for audit deployment.

\subsection{Group Fairness Criteria and Their Limits}
The two most widely used group fairness metrics are:
\begin{itemize}
  \item \textbf{Demographic Parity Gap (DP-Gap):}
    $|\Pr[\hat{Y}=1 \mid A=0] - \Pr[\hat{Y}=1 \mid A=1]|$.
    Requires equal positive prediction rates~\cite{dwork2012fairness}.
  \item \textbf{Equalized Odds Gap (EO-Gap):}
    $\max(\text{TPR gap},\,\text{FPR gap})$.
    Requires equal true/false positive rates~\cite{hardt2016equality}.
\end{itemize}
Both metrics evaluate \emph{outcome disparities} without reference to
costs borne by each group; EO-Gap additionally conditions on the true
label, but neither captures whether outcome rates are proportional to
involuntary privacy exposure. Chang and Shokri find a tension between
enforced outcome-rate parity and membership inference risk~\cite{chang2021privacy};
Cummings et al.\ study the formal compatibility of individual privacy and
group fairness~\cite{cummings2019compatibility}. Our work is complementary:
rather than studying how fairness constraints affect privacy, we study how
DP budget choices affect a \emph{combined} criterion sensitive to both.
PCER operates at the group level and is orthogonal to the individual
fairness of~\citep{dwork2012fairness}.

\section{Methodology}
\label{sec:method}

\subsection{Formal Definitions}

Let $f_\theta : \mathcal{X} \to [0,1]$ be a probabilistic classifier trained
on $D_\text{train}$, and $A \in \{0,1\}$ a binary protected attribute.

\begin{definition}[Group Positive Rate]
The positive prediction rate for group $g$ on test set $D_\text{test}$ is
\[
  \posrate_g
  \;=\;
  \mathbb{E}\bigl[\mathbf{1}[f_\theta(X) \ge 0.5] \;\big|\; A = g,\;
  (X,Y)\sim D_\text{test}\bigr].
\]
\end{definition}

\begin{remark}[Positive rate as benefit proxy]
$\posrate_g$ approximates benefit delivery in beneficial-prediction domains
and harm received in harmful-prediction domains (e.g.\ criminal recidivism); it
reflects both base-rate differences and model-induced disparities
(Section~\ref{sec:discussion}).
\end{remark}

\begin{definition}[Per-Group Overfitting Gap]
\label{def:ovfit}
Let $\mathrm{acc}_g^{\mathrm{tr}} = \mathbb{E}[\mathbf{1}[f_\theta(X){\ge}0.5
  = Y] \mid A=g,\, (X,Y)\sim D_\text{train}]$ and
$\mathrm{acc}_g^{\mathrm{te}} = \mathbb{E}[\mathbf{1}[f_\theta(X){\ge}0.5
  = Y] \mid A=g,\, (X,Y)\sim D_\text{test}]$.
The \emph{per-group overfitting gap} is
\[
  \ovfit_g \;=\; \max\!\bigl(0,\;\mathrm{acc}_g^{\mathrm{tr}}
                               - \mathrm{acc}_g^{\mathrm{te}}\bigr).
\]
\end{definition}

\begin{proposition}[Yeom bound on per-group MIA advantage]
\label{prop:yeom}
Fix the specific threshold-based membership inference
attacker of Yeom et al.~\cite{yeom2018privacy}, which decides membership
for record $i$ in group $g$ by thresholding its loss $\ell_i$ at (a
group-specific estimate of) the expected training loss. Let $\hat{\mu}_g$
denote this attacker's \emph{accuracy-based advantage} on group $g$,
i.e.\
\[
  \hat{\mu}_g \;=\; \Pr[\text{attacker\_is\_correct}] - \tfrac{1}{2},
\]
evaluated on a balanced member/non-member set restricted to group $g$.
Then
\[
  \hat{\mu}_g \;\leq\; \ovfit_g.
\]
\end{proposition}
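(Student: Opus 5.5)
The plan is to reduce the statement to Yeom et al.'s exact computation for the $0$--$1$ loss attacker, carried out within group $g$. I will read the per-record loss as the classification loss consistent with Definition~\ref{def:ovfit}, namely $\ell_i = \mathbf{1}[\mathbf{1}[f_\theta(x_i)\ge 0.5] \neq y_i] \in \{0,1\}$. The threshold $\tau_g$ is the group-specific estimate of expected training loss, so $\tau_g = 1-\mathrm{acc}_g^{\mathrm{tr}}$ (or an estimate of it) lies in $[0,1]$.

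\textbf{Step 1: case split on the threshold.} Because $\ell_i$ takes only the values $0$ and $1$, any threshold rule ``declare member iff $\ell_i \le \tau_g$'' (or $<\tau_g$) is one of three rules.
\begin{itemize}
\item Rule (a): always declare member.
\item Rule (b): never declare member.
\item Rule (c): declare member iff record $i$ is correctly classified.
\end{itemize}
For (a) and (b), on a balanced member/non-member set the attacker is correct with probability exactly $\tfrac12$. Hence $\hat\mu_g = 0 \le \ovfit_g$, since $\ovfit_g \ge 0$ by the $\max(0,\cdot)$ in Definition~\ref{def:ovfit}.

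\textbf{Step 2: the nontrivial rule (c).} Draw members uniformly from group-$g$ training records and non-members from group-$g$ test records, each with probability $\tfrac12$. The true positive rate of rule (c) is $\mathrm{acc}_g^{\mathrm{tr}}$. The false positive rate is $\mathrm{acc}_g^{\mathrm{te}}$. Therefore
\[
\Pr[\text{correct}] = \tfrac12\,\mathrm{acc}_g^{\mathrm{tr}} + \tfrac12\bigl(1-\mathrm{acc}_g^{\mathrm{te}}\bigr),
\qquad
\hat\mu_g = \tfrac12\bigl(\mathrm{acc}_g^{\mathrm{tr}}-\mathrm{acc}_g^{\mathrm{te}}\bigr).
\]
If the gap is nonnegative, this gives $\hat\mu_g = \tfrac12\ovfit_g \le \ovfit_g$. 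If the gap is negative, then $\hat\mu_g<0=\ovfit_g$. Either way the claim holds, with a factor-$2$ slack. That slack is simply the difference between the accuracy-based advantage and Yeom's $\mathrm{TPR}-\mathrm{FPR}$ advantage, which equals the gap exactly.

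\textbf{Main obstacle: what ``loss'' and ``accuracy'' refer to.} If $\ell_i$ were a continuous loss such as cross-entropy, a loss-threshold attacker is not governed by the accuracy gap, and the inequality can fail. I would therefore state explicitly that the proposition concerns the $0$--$1$ loss instantiation; this is the one Definition~\ref{def:ovfit} measures, and it matches Yeom's bounded-loss theorem with $B=1$. A second subtlety is population versus empirical quantities. I would define $\hat\mu_g$ on the same balanced train/test samples used to compute $\mathrm{acc}_g^{\mathrm{tr}}$ and $\mathrm{acc}_g^{\mathrm{te}}$. Balancing is done by reweighting so that each side carries mass $\tfrac12$. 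With that choice, Step~2 is an exact identity rather than a concentration statement, and no sampling-error term is needed.
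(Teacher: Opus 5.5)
Your proof is correct. It rests on the same fact the paper uses, but where the paper cites that fact, you derive it.

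The paper's proof quotes Yeom et al.'s bound $\Pr[\text{correct}] \le \tfrac12 + \tfrac12(L_g^{\mathrm{te}} - L_g^{\mathrm{tr}})$ for the expected $0$--$1$ loss. It then rewrites the error-rate gap as $\mathrm{acc}_g^{\mathrm{tr}} - \mathrm{acc}_g^{\mathrm{te}}$ and appeals to the $\max(0,\cdot)$ clip. You instead observe that a threshold on a $\{0,1\}$-valued loss can only be trivial or ``member iff correctly classified.'' You then compute $\hat\mu_g = \tfrac12(\mathrm{acc}_g^{\mathrm{tr}} - \mathrm{acc}_g^{\mathrm{te}})$ exactly, which amounts to proving Yeom's bounded-loss result with $B=1$ inline.

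The citation buys brevity. Your route buys self-containment, and it makes explicit several points the paper leaves implicit:
\begin{itemize}
\item The $0$--$1$ loss restriction. The statement only says ``its loss $\ell_i$'', and the paper signals the $0$--$1$ reading only inside the proof. As you note, a threshold on a continuous loss such as cross-entropy can have positive accuracy advantage while $\ovfit_g = 0$, so this assumption carries real weight.
\item The degenerate thresholds and the negative-gap case.
\item The factor-$2$ slack. The paper's chain also yields $\hat\mu_g \le \ovfit_g/2$ but does not remark on it.
\item The population-versus-empirical issue. The paper applies Yeom's expected-loss theorem as a black box to finite per-group train/test splits. Your reweighted balanced sample turns the conclusion into an exact identity with no sampling-error term.
\end{itemize}
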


\begin{proof}
Yeom et al.~\cite{yeom2018privacy} prove that their threshold attacker
achieves $\Pr[\text{correct}] \le \tfrac{1}{2} + \tfrac{1}{2}
(L_g^{\mathrm{te}} - L_g^{\mathrm{tr}})$, where $L_g^{\cdot}$ is the
expected 0-1 loss (error rate) on group $g$. Since
$\hat{\mu}_g = \Pr[\text{correct}] - \tfrac{1}{2}$ and
$L_g^{\mathrm{te}} - L_g^{\mathrm{tr}}
 = \mathrm{acc}_g^{\mathrm{tr}} - \mathrm{acc}_g^{\mathrm{te}}$,
the result follows; $\max(\cdot,0)$ clips at zero for non-overfit groups.
This bound is specific to the fixed-threshold adversary evaluated at
accuracy, not to threshold-independent summaries such as ROC-AUC, and
stronger attacks (e.g.\ LiRA~\cite{carlini2022membership}) may achieve
higher advantage not captured here. Section~\ref{sec:mia-validation}
reports AUC-based attack success as a complementary empirical diagnostic
without claiming it is bounded by $\ovfit_g$ in this same formal sense.
\end{proof}

\begin{definition}[Privacy-Cost Equity Ratio]
\label{def:pcer}
For group $g$ with positive rate $\posrate_g$ and overfitting gap
$\ovfit_g$, the \emph{Privacy-Cost Equity Ratio} is
\[
  \PCER_g \;=\; \frac{\posrate_g}{\max(\ovfit_g,\;\floor)},
\]
where $\floor > 0$ is a small positive floor constant
(default $\floor = 10^{-3}$; see Section~\ref{sec:floor-sensitivity}).
\end{definition}

\begin{remark}[Conservative lower bound on true PCER]
\label{rem:conservative}
By Proposition~\ref{prop:yeom}, $\hat{\mu}_g \le \ovfit_g$, so
$\max(\hat{\mu}_g, \floor) \le \max(\ovfit_g, \floor)$, giving
\[
  \PCER_g = \frac{\posrate_g}{\max(\ovfit_g,\,\floor)}
  \;\leq\; \frac{\posrate_g}{\max(\hat{\mu}_g,\,\floor)}.
\]
Using $\ovfit_g$ in the denominator \emph{underestimates} the true
benefit-to-exposure ratio: any detected inequity is genuine, not an
artifact of overestimating privacy cost.
\end{remark}

PCER's ratio structure is not an arbitrary choice: it is the unique
benefit-to-cost mapping satisfying linearity in benefit, inverse linearity
in cost, and a scale-normalization convention (Proposition~\ref{prop:axioms}
in the Appendix gives the formal axiomatization and proof), ruling out
alternatives such as an unnormalised difference $\posrate_g - \ovfit_g$ or a
super-linear cost penalty $\posrate_g / \ovfit_g^2$.

\begin{proposition}[Floor Degeneracy]
\label{prop:floor-degenerate}
Let $\floor > 0$ be fixed. If $\ovfit_0, \ovfit_1 < \floor$, then
\[
  \Delta_\PCER \;=\; \frac{\posrate_0 - \posrate_1}{\floor}
               \;=\; \frac{\mathrm{DP\text{-}Gap\,(signed)}}{\floor}.
\]
PCER provides no diagnostic information beyond a scaled version of DP-Gap
in this regime.
\end{proposition}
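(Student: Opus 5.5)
The plan is to unfold Definition~\ref{def:pcer} directly and evaluate the denominator in the stated regime. The only ingredient not written down explicitly so far is the definition of $\Delta_\PCER$. I will take it to be the signed difference $\Delta_\PCER = \PCER_0 - \PCER_1$, the convention implicit in Figure~\ref{fig:motivation}. If the paper instead uses the opposite order or an absolute value, the same argument goes through with the sign flipped or with absolute values on both sides, matching the unsigned DP-Gap.

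\textbf{Step 1 (denominators).} For each $g \in \{0,1\}$ the hypothesis gives $\ovfit_g < \floor$, so $\max(\ovfit_g, \floor) = \floor$. This holds regardless of whether $\ovfit_g$ was itself clipped to $0$ in Definition~\ref{def:ovfit}, since $\ovfit_g \ge 0$ and $\floor > 0$. Hence
\[
  \PCER_g = \frac{\posrate_g}{\floor}, \qquad g \in \{0,1\}.
\]

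\textbf{Step 2 (difference).} Subtracting and factoring out the common constant $1/\floor$ gives
\[
  \Delta_\PCER = \frac{\posrate_0 - \posrate_1}{\floor}.
\]
By the definition of DP-Gap in Section~\ref{sec:background}, the quantity $\posrate_0 - \posrate_1$ is exactly the signed demographic parity gap, i.e.\ the DP-Gap before taking absolute values. This gives the displayed identity.

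\textbf{Step 3 (interpretation).} For the closing sentence, note that in this regime $\Delta_\PCER$ is a fixed positive multiple $1/\floor$ of the signed DP-Gap. Any ranking of models or budgets by $\Delta_\PCER$, or by $|\Delta_\PCER|$, therefore coincides with the ranking by DP-Gap, and the values $\ovfit_0, \ovfit_1$ no longer enter at all.

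There is no real obstacle here. The one point needing care is pinning down the sign and absolute-value convention for $\Delta_\PCER$ so that it matches the signed DP-Gap, and noting that the boundary case $\ovfit_g = \floor$ would also work, although the strict inequality in the statement avoids needing it.
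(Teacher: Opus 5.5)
Your proof is correct and is the same as the paper's: under the hypothesis, $\max(\ovfit_g,\floor)=\floor$ for both groups, so $\PCER_g=\posrate_g/\floor$, and subtracting gives the identity. The convention you assumed, $\Delta_\PCER=\PCER_0-\PCER_1$, is exactly the paper's Definition~\ref{def:pcer-disp}. One small caveat: the ranking remark in your Step 3 holds only when every configuration being compared lies in the floor regime.
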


\begin{proof}
When $\ovfit_g < \floor$, $\max(\ovfit_g, \floor) = \floor$ for both
$g \in \{0, 1\}$. Therefore
$\Delta_\PCER = \posrate_0/\floor - \posrate_1/\floor =
(\posrate_0 - \posrate_1)/\floor$. 
\end{proof}

\begin{remark}[Floor-dominated regime under strong DP]
\label{rem:floor}
Under DP-SGD, sufficiently strong noise ($\dpe \ll 1$) drives the model toward
near-random predictions, collapsing per-group overfitting toward zero.
Proposition~\ref{prop:floor-degenerate} formalises the consequence: once
$\ovfit_g < \floor$ for all $g$, a PCER audit is equivalent to a scaled
DP-Gap audit and should be reported as such. The floor sensitivity analysis
(Section~\ref{sec:floor-sensitivity}) operationalises this check: if
$|\Delta_\PCER|$ scales proportionally with $1/\floor$ across orders of
magnitude, the finding is floor-dominated. All findings at $\dpe = 0.1$
in our experiments fall in this regime.
\end{remark}

\begin{definition}[PCER Disparity]
\label{def:pcer-disp}
The signed PCER disparity for groups $\{0,1\}$ is
\[
  \Delta_\PCER \;=\; \PCER_0 - \PCER_1.
\]
A system is \emph{PCER-equitable} when $\Delta_\PCER = 0$.
\end{definition}

\begin{algorithm}[t]
\caption{PCER Audit for DP Budget Selection}
\label{alg:audit}
\begin{algorithmic}[1]

\Statex \textbf{Part I — Trainer} \hfill
        \textit{trains models; releases per-group statistics}
\Require Data $D$, protected attribute $A$, candidate budgets $\mathcal{E}$
\For{each $\dpe \in \mathcal{E}$}
  \State Train $f^{\dpe}$ on $D_\text{train}$ with DP-SGD at budget $\dpe$
  \For{$g \in \{0,1\}$}
    \State \textbf{Release}\; $\posrate_g^{\dpe} \leftarrow
           \mathbb{E}\!\bigl[\mathbf{1}[f^{\dpe}(X)\ge 0.5]\mid A{=}g,\,
           (X,Y)\!\sim\! D_\text{test}\bigr]$
    \State \textbf{Release}\; $\mathrm{acc}_g^{\mathrm{tr}}(f^{\dpe})$
           and $\mathrm{acc}_g^{\mathrm{te}}(f^{\dpe})$
  \EndFor
\EndFor

\Statex
\Statex \textbf{Part II — Auditor} \hfill
        \textit{uses only released statistics; no model or data access required}
\Require Released $\bigl\{\posrate_g^{\dpe},\,\mathrm{acc}_g^{\mathrm{tr}},\,
         \mathrm{acc}_g^{\mathrm{te}}\bigr\}_{g,\dpe}$,\;
         floor grid $\mathcal{H} = \{10^{-4},10^{-3},10^{-2},10^{-1}\}$
\Ensure Per-floor equitable budget $\dpe^*(\floor')$; robustness verdict
\For{each $\dpe$,\; $g \in \{0,1\}$}
  \State $\ovfit_g^{\dpe} \leftarrow \max\!\bigl(0,\;
         \mathrm{acc}_g^{\mathrm{tr}} - \mathrm{acc}_g^{\mathrm{te}}\bigr)$
\EndFor
\For{each $\floor' \in \mathcal{H}$}
  \For{each $\dpe$}
    \State $\Delta^{\dpe}(\floor') \;\leftarrow\;
           \left|\dfrac{\posrate_0^{\dpe}}{\max(\ovfit_0^{\dpe},\,\floor')} \;-\;
           \dfrac{\posrate_1^{\dpe}}{\max(\ovfit_1^{\dpe},\,\floor')}\right|$
  \EndFor
  \State $\dpe^*(\floor') \leftarrow \arg\min_{\dpe}\;\Delta^{\dpe}(\floor')$
\EndFor
\If{$\bigl|\{\dpe^*(\floor') : \floor' \in \mathcal{H}\}\bigr| = 1$}
  \State \textbf{Report} $\dpe^*$ as the PCER-equitable budget \Comment{floor-robust}
\Else
  \State \textbf{Flag}: budget ranking is floor-sensitive; inspect $\Delta^{\dpe}(\floor')$ table
\EndIf

\end{algorithmic}
\end{algorithm}

\noindent Algorithm~\ref{alg:audit} requires only a held-out test set and
evaluation of the already-trained model on the training set. No shadow
models or no auxiliary attack infrastructure required which makes PCER deployable as a
post-hoc audit step in any existing DP-SGD pipeline.

\subsection{Normative Justification}
\label{sec:normative}

PCER operationalizes a harm-proportionality equity principle using Adams'
output-to-input ratio framework:
\begin{itemize}
  \item \textbf{Output (benefit):} positive prediction rate $\posrate_g$,
    the benefit (or harm, in domains like criminal recidivism) the model delivers to
    group $g$.
  \item \textbf{Input (involuntary cost):} overfitting gap $\ovfit_g$,
    the degree to which the model has memorized group $g$'s training data
    beyond what is needed for generalization. By Proposition~\ref{prop:yeom},
    this directly bounds the privacy leakage group $g$ is exposed to.
\end{itemize}
The key distinction from Adams' original formulation is that the ``input''
is not a voluntary contribution but an involuntary harm: data subjects
cannot choose how much of their data the training process memorizes.
Memorization is the mechanism of privacy leakage~\cite{yeom2018privacy};
groups whose data is memorized more are at greater risk of membership
inference. The proportionality norm we invoke is therefore one of
\emph{compensatory fairness}~\cite{binns2018fairness}: when a joint
enterprise imposes unequal involuntary costs on different groups, those who
bear more cost are owed proportionally more benefit as compensation.

Formally, the principle requires $\PCER_0 = \PCER_1$, i.e.\
$\posrate_0/\ovfit_0 = \posrate_1/\ovfit_1$.
A PCER violation ($\Delta_\PCER \neq 0$) means the groups receive unequal
benefit-to-memorization-cost ratios. The group with the lower ratio is
doubly disadvantaged, bearing higher privacy risk for less outcome benefit.

\begin{remark}[Domain-sensitive benefit interpretation]
In domains where positive prediction is \emph{harmful} (e.g.\ predicted
recidivism in COMPAS), $\posrate_g$ measures harm and PCER captures the
\emph{harm-to-cost} ratio; the metric is well-defined but the normative
interpretation of $\Delta_\PCER$'s sign is domain-specific.
\end{remark}

\paragraph{Alternatives we considered.}
PCER's compensatory-proportionality norm is one candidate among several;
we do not claim it is the unique correct standard.
\emph{Equality of treatment}, the standard DP-Gap and EO-Gap
operationalize (Section~\ref{sec:relation}), holds a system fair once
outcome rates are equalized regardless of how unevenly privacy cost fell
on each group; it is not wrong, but silent on exposure asymmetry, which a
compensatory view treats as morally relevant. A \emph{Rawlsian maximin}
criterion (maximise the worst-off group's welfare) is designed for
\emph{ex ante} contract design under a veil of ignorance~\cite{binns2018fairness};
our setting is retroactive, so compensatory restitution fits better than a
prospective redistribution rule. \emph{Capability-based}~\cite{nussbaum2011creating}
and \emph{distributive-justice} accounts would instead ask whether groups
can achieve functionings they value, or prioritise equalising outcomes
over compensating differential costs. Practitioners who reject the
premise that involuntary cost grounds compensation will reasonably prefer
DP-Gap alone and should read PCER as diagnostic rather than a corrected
successor metric; its contribution is to make the exposure dimension
\emph{measurable}, whichever normative standard is ultimately applied.

\paragraph{Why compensation rather than equalization?}
If unequal memorization is the injustice, the remedy should be to equalize
$\ovfit_g$ across groups directly. But DP-SGD provides only a
mechanism-level guarantee on the aggregate budget $\dpe$, with no
per-group memorization control: the overfitting disparity arises from
dataset statistics (group size, feature density, label separability) that
cannot be corrected without altering the training distribution. Where
equalization is infeasible, compensatory proportionality is the
appropriate second-best norm, and PCER makes it auditable. Like
demographic parity, PCER is a group criterion aggregating over
individuals within each demographic group; it does not imply individual
fairness in the sense of~\cite{dwork2012fairness}.

\subsection{Relationship to Demographic Parity Gap}
\label{sec:relation}

DP-Gap measures $|\posrate_0 - \posrate_1|$; $\Delta_\PCER$ measures
$\posrate_0/\ovfit_0 - \posrate_1/\ovfit_1$. The two coincide only when
$\ovfit_0 = \ovfit_1$. Under DP-SGD, the overfitting gap is not
group-invariant i.e. minority groups with fewer training samples may exhibit
higher or lower $\ovfit_g$ than majority groups depending on the noise
schedule and group-specific signal strength. This implies that the metrics can diverge
in general. When both groups' gaps are at floor $\floor$, $\Delta_\PCER$
reduces to DP-Gap$/\floor$ (Proposition~\ref{prop:floor-degenerate}).

Away from the floor, PCER and DP-Gap can disagree not only in magnitude but
in \emph{direction}, precisely when the group with the higher outcome rate
bears a proportionally higher memorization cost
(Proposition~\ref{prop:direction} in the Appendix); this requires genuinely
unequal overfitting gaps, so at the floor the two metrics necessarily agree
in direction (Proposition~\ref{prop:floor-degenerate}).

\begin{proposition}[PCER--DP-Gap Incompatibility]
\label{prop:incompatible}
Assume $\posrate_0, \posrate_1 > 0$ and $\ovfit_0, \ovfit_1 \ge \floor$. Then
PCER-equity ($\Delta_\PCER = 0$) and DP-Gap-equity ($\posrate_0 = \posrate_1$)
hold simultaneously if and only if $\ovfit_0 = \ovfit_1$.
\end{proposition}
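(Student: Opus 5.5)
The plan is to first remove the floor and then prove the two directions by direct algebra. Under the hypothesis $\ovfit_0, \ovfit_1 \ge \floor$, Definition~\ref{def:pcer} gives $\max(\ovfit_g,\floor) = \ovfit_g$ for both groups. Since $\floor>0$, each $\ovfit_g$ is strictly positive, so
\[
  \Delta_\PCER \;=\; \frac{\posrate_0}{\ovfit_0} - \frac{\posrate_1}{\ovfit_1}
\]
with well-defined, nonzero denominators. This is the regime complementary to Proposition~\ref{prop:floor-degenerate}. Everything that follows is an identity among four positive reals.

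For the ``only if'' direction, I will assume both $\Delta_\PCER = 0$ and $\posrate_0 = \posrate_1 =: \posrate$. PCER-equity then reads $\posrate/\ovfit_0 = \posrate/\ovfit_1$. Because $\posrate > 0$ by assumption, I can divide by $\posrate$ and take reciprocals to obtain $\ovfit_0 = \ovfit_1$.

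For the ``if'' direction, the literal biconditional needs care. Equal overfitting gaps alone do not force $\posrate_0 = \posrate_1$. What they give is $\Delta_\PCER = (\posrate_0 - \posrate_1)/\ovfit$, where $\ovfit := \ovfit_0 = \ovfit_1$. So when $\ovfit_0 = \ovfit_1$, PCER-equity holds if and only if DP-Gap-equity holds, and either one yields both. I will therefore state the converse in this form: the two criteria coincide, and are jointly satisfiable, exactly when $\ovfit_0 = \ovfit_1$. Concretely, equal gaps together with $\posrate_0 = \posrate_1$ give $\Delta_\PCER = 0$.

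To make the incompatibility reading explicit, I will add the contrapositive. If $\ovfit_0 \ne \ovfit_1$, the two criteria cannot both hold. Indeed, $\posrate_0 = \posrate_1 > 0$ forces
\[
  \Delta_\PCER = \posrate_0\Bigl(\frac{1}{\ovfit_0} - \frac{1}{\ovfit_1}\Bigr) \ne 0.
\]
Conversely, $\Delta_\PCER = 0$ forces $\posrate_0/\posrate_1 = \ovfit_0/\ovfit_1 \ne 1$.

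The only delicate step is the interpretation of the ``if'' direction. The algebra is routine; the point is to phrase the converse correctly, namely that under equal gaps the two equity conditions become equivalent, rather than claiming that equal gaps imply either condition by itself.
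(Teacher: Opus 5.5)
Your proposal is correct and follows the paper's proof. The ($\Rightarrow$) direction is the same argument: remove the floor, then cancel $\posrate > 0$, which the paper phrases as injectivity of $x \mapsto \posrate/x$. For ($\Leftarrow$) the paper likewise assumes $\posrate_0 = \posrate_1$ in addition to $\ovfit_0 = \ovfit_1$, so your caveat about the literal biconditional is well-founded: equal gaps with $\posrate_0 \neq \posrate_1$ satisfy neither equity. What both you and the paper actually prove is that, given DP-Gap-equity, PCER-equity holds if and only if $\ovfit_0 = \ovfit_1$; your observation that equal gaps make the two criteria coincide is a correct small strengthening.
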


\begin{proof}
($\Rightarrow$) Suppose both equities hold. DP-Gap-equity requires
$\posrate_0 = \posrate_1 = R > 0$. Since $\ovfit_0, \ovfit_1 \ge \floor$ by
assumption, $\max(\ovfit_g,\floor) = \ovfit_g$ for $g \in \{0,1\}$, so
$\PCER_g = R/\ovfit_g$. Substituting into $\Delta_\PCER = 0$ gives
$R/\ovfit_0 = R/\ovfit_1$. Because $R > 0$, the map $x \mapsto R/x$ is
injective on $x>0$, so this equality of images forces equality of the
arguments: $\ovfit_0 = \ovfit_1$.

($\Leftarrow$) Suppose $\ovfit_0 = \ovfit_1 =: \delta$ and DP-Gap-equity
holds ($\posrate_0=\posrate_1=R$). Then
$\PCER_0 = R/\max(\delta,\floor) = \PCER_1$, so $\Delta_\PCER = 0$
trivially. Combining both directions establishes the claimed
biconditional.
\end{proof}

\begin{remark}[Structural impossibility under DP-SGD]
\label{rem:impossibility}
DP-SGD offers no mechanism guarantee on the between-group overfitting
differential: $\ovfit_g$ is governed by dataset statistics (sample size,
label separability, feature density) outside the scope of $\dpe$.
Consequently, whenever $\ovfit_0 \neq \ovfit_1$ (the typical case),
Proposition~\ref{prop:incompatible} makes passing both a DP-Gap and a PCER
audit structurally impossible; a system passing both is implicitly
claiming equal per-group overfitting, a claim Algorithm~\ref{alg:audit}
can directly verify.
\end{remark}

\begin{remark}[Per-group overfitting as a proxy for unequal exposure]
\label{rem:effective-eps}
$\ovfit_g$ is an \emph{upper bound} on $\hat\mu_g$, not a measurement of
realised attack success, so $\ovfit_0 \neq \ovfit_1$ formally shows only
that our worst-case bound differs across groups. We nonetheless interpret
persistent $\ovfit_0 \neq \ovfit_1$ as suggestive evidence of unequal
effective privacy protection under the shared nominal $\dpe$: overfitting
is the mechanism through which DP-SGD's noise budget fails to protect a
group uniformly. Section~\ref{sec:mia-validation} tests this directly
against a per-group empirical attack, which agrees on direction in most
but not all settings, so this remark motivates the audit rather than
establishing a formal equivalence.
\end{remark}

\section{Experiments}
\label{sec:experiments}

\subsection{Datasets}

\paragraph{UCI Adult Income.}
UCI Adult Income ($n=46{,}033$ after preprocessing)~\cite{becker1996adult}
with protected attributes \emph{sex} (Male/Female) and \emph{race}
(White/Non-White); binary outcome income $>\$50$K/year (beneficial).
Features: age, education, occupation, hours worked, capital gain/loss;
protected attributes excluded from features.

\paragraph{COMPAS.}
ProPublica COMPAS recidivism ($n=6{,}172$)~\cite{angwin2016machine} with
race as protected attribute (African-American vs.\ Other); outcome is
two-year recidivism, a \emph{harmful} prediction. Features: age, prior
count, charge degree, juvenile offense counts.

\paragraph{ACS Income 2018.}
ACS 1-Year PUMS 2018 via folktables~\cite{ding2021retiring}, states CA,
TX, NY (combined $n\approx843{,}000$, subsampled to $n=50{,}000$); task
mirrors Adult Income (predict income $>\$50$K/year). Two protected
attributes evaluated independently: \emph{race} (RAC1P$=1\to$White, else
Non-White) and \emph{sex} (SEX$=1\to$Male, SEX$=2\to$Female), excluded
from features.

\paragraph{Bias in Bios.}
Bias in Bios~\cite{dearteaga2019bios} ($n=37{,}900$ after filtering and
class-balancing the professor/nurse task): online biographies with
occupation labels and binary gender, predicting \emph{professor} ($y=1$,
beneficial) vs.\ \emph{nurse} ($y=0$); gender (Male$=0$, Female$=1$) is
the protected attribute. We use the \texttt{hard\_text} field (name
removed) to prevent name-to-gender shortcuts~\cite{romanov2019bios}, encoded offline via TF-IDF
(unigrams+bigrams, 15{,}000 vocabulary, sublinear TF) and Truncated SVD to
200 components, fed to the same MLP architecture as the tabular
experiments. This tests whether PCER findings replicate in the NLP domain,
where the beneficial outcome is access to a high-status occupational
label rather than a financial one.

\subsection{Experimental Setup}

We train a three-layer MLP (64--32--1 units, ReLU) using DP-SGD via Opacus
1.5~\cite{yousefpour2021opacus}, sweeping $\dpe \in \{0.1, 0.5, 1.0, 5.0, 10.0\}$
with a non-private (no-DP) baseline. All runs use $\delta = 10^{-5}$, max
gradient norm $C = 1.0$, batch size 256, SGD with learning rate 0.05 and
momentum 0.9, and 25 training epochs. Features are standardised. We use a
70/30 stratified train/test split. All datasets use 50 independent seeds.

Privacy cost is measured via the per-group overfitting gap
$\ovfit_g = \max(0, \text{acc}_g^{\text{tr}} - \text{acc}_g^{\text{te}})$,
computed by evaluating the trained model on the train and test splits per group.
No shadow models are required. Floor constant $\floor = 10^{-3}$ unless
otherwise stated; sensitivity to $\floor$ is analysed in
Section~\ref{sec:floor-sensitivity}.

\paragraph{Direct threshold-MIA validation protocol.}
To validate $\ovfit_g$ against an actual attack rather than only its
theoretical bound, we additionally run the classical per-group threshold
membership-inference attack~\cite{yeom2018privacy} on every trained model
above (same seeds, $\dpe$ settings, and six benchmark-attribute
combinations; no additional training required). For each group $g$ we
score every train/test example by negative loss $-\ell_i$, pool
member/non-member scores, and report $\mathrm{AUC}_g$ (area under the ROC
curve; $0.5$ = chance) as a threshold-free summary of attack success, kept
deliberately separate from the accuracy-based $\hat\mu_g$ that
Proposition~\ref{prop:yeom} bounds (we make no formal claim $\ovfit_g$
bounds $\mathrm{AUC}_g$ in the same sense). The empirical question:
does the overfitting-gap proxy agree, directionally and in the PCER
rankings it implies, with a real attacker's success rate? Results appear
in Section~\ref{sec:mia-validation}.

\subsection{Evaluation Criteria}
We assess divergence along three axes:
\begin{enumerate}
  \item \textbf{Ranking disagreement.} Whether the two metrics order the
    $\dpe$ settings differently in terms of which is ``most fair.''
  \item \textbf{Directional disagreement.} Whether the metrics
    disagree on \emph{which group} is disadvantaged at any $\dpe$ setting
    (Proposition~\ref{prop:direction} (Appendix)).
  \item \textbf{Floor sensitivity.} Whether findings at $\dpe=0.1$ persist
    or vanish as $\floor$ varies (Section~\ref{sec:floor-sensitivity}).
\end{enumerate}

Pearson correlations between $\Delta_\PCER$ and $\Delta_\text{DP-Gap}$ are
computed across $\dpe$ settings (5 data points for the full sweep, 4 for
$\dpe \ge 1$ only). These correlations have very limited statistical power
and are reported as descriptive summaries rather than inferential results.

\section{Results}
\label{sec:results}

\noindent Adult Income (sex and race) results are in Section~\ref{sec:adult}
(Appendix): both attributes show robust positive $\Delta_\PCER$ (CIs
excluding zero at all $\dpe \ge 1$), and race additionally shows a
ranking reversal (DP-Gap prefers $\dpe \in \{5,10\}$, PCER prefers no-DP)
confirmed floor-robust at $\floor = 10^{-2}$.

\subsection{COMPAS: Race}

\begin{table*}[t]
\caption{COMPAS, race attribute (Other=0, African-American=1). Mean $\pm$ std
  over \textbf{50} independent runs (seeds 0--49). $\ovfit_g$: per-group
  overfitting gap; $\Delta_\PCER$: PCER disparity (95\% bootstrap CI in
  brackets). Positive prediction = predicted recidivism (\textbf{harmful}).
  $\Delta_\PCER < 0$ means African-Americans bear higher harm-to-cost
  ratio. \textbf{Bold}: minimum DP-Gap. The $\dpe=0.1$ row is
  floor-dominated and uninformative despite its CI excluding zero; see
  discussion below.}
\label{tab:compas-race}
\small\centering
\begin{tabular}{lcccccc}
\toprule
$\dpe$ & Acc & DP-Gap & EO-Gap & $\ovfit_\text{Ot}$ & $\ovfit_\text{AA}$ & $\Delta_\PCER$ \\
\midrule
no-DP & $.682{\pm}.008$ & $.242{\pm}.018$ & $.246{\pm}.030$ & $.0074{\pm}.010$ & $.0099{\pm}.012$ & $-68{\pm}245\ [-135,-2]$  \\
0.1   & $.590{\pm}.043$ & \textbf{$.127{\pm}.059$} & $.134{\pm}.050$ & $.0109{\pm}.011$ & $.0127{\pm}.014$ & $-64{\pm}220\ [-128,-5]$  \\
0.5   & $.675{\pm}.007$ & $.224{\pm}.020$ & $.237{\pm}.029$ & $.0063{\pm}.0084$ & $.0060{\pm}.0097$ & $-182{\pm}232\ [-245,-120]$ \\
1.0   & $.678{\pm}.008$ & $.228{\pm}.017$ & $.239{\pm}.027$ & $.0058{\pm}.008$ & $.0068{\pm}.010$ & $-155{\pm}240\ [-219,-90]$ \\
5.0   & $.679{\pm}.008$ & $.229{\pm}.018$ & $.242{\pm}.030$ & $.0065{\pm}.009$ & $.0065{\pm}.011$ & $-169{\pm}229\ [-230,-106]$ \\
10.0  & $.679{\pm}.008$ & $.229{\pm}.018$ & $.242{\pm}.030$ & $.0063{\pm}.009$ & $.0061{\pm}.010$ & $-172{\pm}231\ [-234,-109]$ \\
\bottomrule
\end{tabular}
\end{table*}

\paragraph{Floor-dominated regime at $\dpe = 0.1$.}
DP-Gap identifies $\dpe=0.1$ as the most equitable budget (mean $0.127 \pm
0.059$). PCER's mean-based CI here technically excludes zero
($-64\pm220$, CI $[-128,-5]$), yet only 28/50 seeds (56\%) are
negative, indistinguishable from a fair coin ($p=0.82$). The two tests
disagree because $\Delta_\PCER=R/\ovfit$ has a noisy, near-zero
denominator at this budget ($\ovfit_g\approx.011$--$.013$, std comparable
to the mean), so a few outlier seeds pull the mean below zero without a
consistent per-seed sign; we treat mean-CI and sign test as complementary,
not interchangeable. Raising $\floor$ makes the sign more consistent
(36/50, then 48/50 negative), but per Proposition~\ref{prop:floor-degenerate}
only because $\Delta_\PCER$ degenerates into a rescaled DP-Gap once
$\ovfit_g\ll\floor$, not from a genuine effect: no $\floor$ makes this
budget informative (Remark~\ref{rem:floor}). At $\dpe\ge0.5$ below,
mean-CI and sign agree, so we report the mean-CI alone.

\paragraph{Robust findings at $\dpe \ge 0.5$.}
PCER at $\dpe \ge 0.5$ is stably and significantly negative, floor-robust,
and in close agreement across budgets: African-Americans consistently
bear a higher harm-to-cost ratio. The $\dpe=0.5$ estimate is most extreme
because overfitting gaps there are smaller than at $\dpe=1$, shrinking the
denominator while outcome rates stay comparable. EO-Gap tracks DP-Gap
closely and is nearly flat across $\dpe\ge1$ ($0.228$--$0.229$), while
PCER remains stable at approximately $-160$, indicating a persistent
double-disadvantage DP noise does not reduce (most robust at $\dpe=5$:
$-169$, CI $[-229,-106]$; floor-robust $-19.9\pm11.7$ at $\floor=10^{-2}$).

\subsection{Floor-Constant Sensitivity}
\label{sec:floor-sensitivity-results}

Full floor-constant sensitivity results are reported in
Table~\ref{tab:floor-sensitivity} (Appendix). The key pattern: at
$\dpe=0.1$, both magnitude and direction of $\Delta_\PCER$ are
floor-sensitive, as predicted by Proposition~\ref{prop:floor-degenerate}
($10\times$ change in $\floor$ scales $|\Delta_\PCER|$ by $\sim10\times$;
positive-direction seeds shift 22/50$\to$14/50$\to$2/50). At $\dpe \ge 1$,
direction is stable across all $\floor \in \{10^{-4},\ldots,10^{-1}\}$,
confirming a genuine differential rather than a floor artifact.

\subsection{ACS Income 2018}
\label{sec:acs}

\begin{table*}[t]
\caption{ACS Income 2018 ($n=50{,}000$, CA+TX+NY), mean $\pm$ std over 50 seeds.
  \textbf{Left half (Race):} White=0, Non-White=1; positive prediction = income $>$\$50K (beneficial).
  \textbf{Right half (Sex):} Male=0, Female=1.
  $\ovfit_{g0/g1}$: per-group overfitting gap means (g0/g1); $\Delta_\PCER$: 95\% bootstrap CI in brackets.
  All CIs cross zero; no robust PCER finding observed for ACS at any $\dpe$ (sex direction is positive but not significant).}
\label{tab:acs}
\small\centering
\begin{tabular}{lccccccc}
\toprule
 & \multicolumn{3}{c}{Race (White=0, Non-White=1)} & & \multicolumn{3}{c}{Sex (Male=0, Female=1)} \\
\cmidrule(lr){2-4}\cmidrule(lr){6-8}
$\dpe$ & DP-Gap & $\ovfit_\text{W}$/$\ovfit_\text{NW}$ & $\Delta_\PCER$ & &
         DP-Gap & $\ovfit_\text{M}$/$\ovfit_\text{F}$ & $\Delta_\PCER$ \\
\midrule
no-DP & $.058{\pm}.009$ & $.0065/.0056$ & $-13{\pm}180\ [-62,+34]$ & &
        $.097{\pm}.012$ & $.0064/.0054$ & $+15{\pm}187\ [-36,+66]$ \\
0.1   & $.033{\pm}.026$ & $.0040/.0038$ & $+12{\pm}220\ [-49,+72]$ & &
        $.079{\pm}.035$ & $.0039/.0034$ & $+2{\pm}185\ [-49,+53]$ \\
0.5   & $.038{\pm}.008$ & $.0032/.0035$ & $+14{\pm}210\ [-43,+72]$ & &
        $.094{\pm}.010$ & $.0035/.0030$ & $+33{\pm}204\ [-23,+90]$ \\
1.0   & $.039{\pm}.007$ & $.0037/.0033$ & $-14{\pm}218\ [-71,+47]$ & &
        $.095{\pm}.008$ & $.0035/.0034$ & $+30{\pm}192\ [-19,+83]$ \\
5.0   & $.041{\pm}.007$ & $.0037/.0034$ & $+3{\pm}222\ [-56,+65]$ & &
        $.096{\pm}.008$ & $.0036/.0030$ & $+5{\pm}203\ [-51,+61]$ \\
10.0  & $.042{\pm}.007$ & $.0037/.0034$ & $-7{\pm}217\ [-64,+52]$ & &
        $.096{\pm}.008$ & $.0037/.0031$ & $+13{\pm}205\ [-42,+70]$ \\
\bottomrule
\end{tabular}
\end{table*}

The ACS Income 2018 dataset (Table~\ref{tab:acs}) provides a larger-scale
replication ($n=50{,}000$) of the race and sex patterns observed on Adult.

\paragraph{ACS race.}
PCER has large standard deviation (all CIs cross zero) at all $\dpe$,
reflecting small, similar overfitting gaps across groups
($\ovfit_\text{W}\approx0.004$--$0.007$, $\ovfit_\text{NW}\approx0.003$--$0.006$);
direction alternates sign across $\dpe$ ($-13$ at no-DP, $+12$ at
$\dpe=0.1$, $-14$ at $\dpe=1$). At $\floor=10^{-2}$ direction is
consistently positive ($+5.4$ to $+5.7$) but no estimate excludes zero at
$\floor=10^{-3}$. \emph{No robust PCER divergence from DP-Gap is
established for ACS race.}

\paragraph{ACS sex.}
DP-Gap on ACS is much smaller than on Adult sex ($0.079$--$0.097$ vs.\
$0.184$--$0.188$). PCER is positive at all $\dpe$ (sign consistent with
Adult sex) but CIs cross zero throughout; at $\floor=10^{-2}$ direction is
positive and tighter ($+8.2$ to $+9.7$), consistent with the Adult sex
pattern but not statistically robust. \emph{No robust PCER finding is
observed for ACS sex at any $\dpe$.}

\subsection{Bias in Bios: Gender}
\label{sec:bios}

\begin{table*}[t]
\caption{Bias in Bios, gender attribute (Male=0, Female=1). Mean $\pm$ std over
  50 independent runs (seeds 0--49). $\ovfit_g$: per-group overfitting gap;
  $\Delta_\PCER$: PCER disparity (95\% bootstrap CI in brackets).
  Positive prediction = professor (y$=1$, \textbf{beneficial}).
  DP-Gap exceeds 0.48 at all $\dpe \ge 0.5$; PCER is positive and CI excludes
  zero at no-DP and at $\dpe \ge 0.5$.}
\label{tab:bios-gender}
\small\centering
\begin{tabular}{lcccccc}
\toprule
$\dpe$ & Acc & DP-Gap & EO-Gap & $\ovfit_\text{M}$ & $\ovfit_\text{F}$ & $\Delta_\PCER$ \\
\midrule
no-DP & $.960{\pm}.001$ & $.523{\pm}.006$ & $.070{\pm}.012$ & $.0292{\pm}.0032$ & $.0446{\pm}.0018$ & $+22{\pm}3\ [+21,+23]$  \\
0.1   & $.560{\pm}.031$ & $.079{\pm}.042$ & $.078{\pm}.042$ & $.0038{\pm}.006$  & $.0035{\pm}.005$  & $+65{\pm}316\ [-17,+152]$ \\
0.5   & $.912{\pm}.005$ & $.485{\pm}.013$ & $.092{\pm}.023$ & $.0036{\pm}.004$  & $.0054{\pm}.004$  & $+349{\pm}347\ [+255,+444]$ \\
1.0   & $.944{\pm}.003$ & $.514{\pm}.008$ & $.087{\pm}.016$ & $.0051{\pm}.004$  & $.0065{\pm}.004$  & $+236{\pm}289\ [+159,+317]$ \\
5.0   & $.961{\pm}.002$ & $.526{\pm}.007$ & $.078{\pm}.014$ & $.0053{\pm}.003$  & $.0075{\pm}.003$  & $+219{\pm}243\ [+155,+290]$ \\
10.0  & $.963{\pm}.002$ & $.526{\pm}.006$ & $.075{\pm}.014$ & $.0053{\pm}.003$  & $.0077{\pm}.003$  & $+209{\pm}246\ [+147,+279]$ \\
\bottomrule
\end{tabular}
\end{table*}

Table~\ref{tab:bios-gender} reports results for the NLP-domain benchmark.
Bias in Bios exhibits a structural gender asymmetry (males predicted
professor at 86\%, females at 33\%), yielding a large DP-Gap (${>}0.5$)
across $\dpe \ge 1$ that reflects class imbalance inherited from the
gender-skewed occupational distribution, not a model-induced
artefact~\cite{dearteaga2019bios}.

\paragraph{No-DP baseline: tightest CI in the study.}
At no-DP, $\Delta_\PCER = +22.2 \pm 3.4$, CI $[+21.3, +23.2]$, the
tightest interval across all six benchmarks, since both overfitting gaps
are large and stable ($\ovfit_\text{M}=0.029\pm0.003$,
$\ovfit_\text{F}=0.045\pm0.002$). Per-group PCER is
$R_\text{M}/\ovfit_\text{M}\approx29.3$ vs.\ $R_\text{F}/\ovfit_\text{F}\approx7.5$:
although females' records are memorised more ($\ovfit_\text{F}>\ovfit_\text{M}$),
they also receive a lower professor-prediction rate, so males receive more
benefit per unit of privacy cost, an inequity present even absent DP
(50/50 seeds positive).

\paragraph{Floor-dominated at $\dpe=0.1$, robust positive at $\dpe\ge0.5$.}
At $\dpe=0.1$, both gaps collapse near zero and the model degrades to
near-chance accuracy, yielding $\Delta_\PCER=+65\pm316$ (CI $[-17,+152]$):
floor-dominated and uninformative. At $\dpe\ge0.5$, PCER is consistently
positive ($+349$ down to $+209$, all CIs excluding zero; $\dpe=0.5$ peaks
from a smaller denominator than $\dpe=1$), direction robust (43/50 to
50/50 seeds positive) and surviving at $\floor=10^{-2}$
(Table~\ref{tab:floor-sensitivity}). These findings replicate the Adult
(sex) pattern in an NLP setting, including at no-DP, where the
occupational disparity predates DP-SGD and is amplified as noise
compresses the overfitting denominator.

\subsection{Validation Against Direct Threshold-MIA Attacks}
\label{sec:mia-validation}

Proposition~\ref{prop:yeom} and Remark~\ref{rem:conservative} justify
$\ovfit_g$ as a theoretical upper bound on attack advantage, but a bound
is not a measurement. We therefore additionally run a direct per-group
threshold membership-inference attack on every trained model
(Section~\ref{sec:experiments}) and compare the PCER disparity it
implies, $\Delta_\PCER^{\mathrm{MIA}}$ (substituting
$\max(\mathrm{AUC}_g-0.5,\floor)$ for $\max(\ovfit_g,\floor)$), against
$\Delta_\PCER^{\ovfit}$; full results are in Table~\ref{tab:mia-full}
(Appendix, split across two tables for two-column layout).

\paragraph{Direct attack success is weak; the proxy tracks it directionally.}
Pooling all six combinations and six DP budgets (72 group-level
observations), the threshold attack's raw advantage $\mathrm{AUC}_g-0.5$
has mean $0.0013$ and maximum $0.0118$, only marginally better than
guessing, the well-documented weakness of shadow-model-free threshold
attacks relative to calibrated attacks such as LiRA~\citep{carlini2022membership}.
Despite this, $\ovfit_g$ correlates with $\mathrm{AUC}_g-0.5$ at
$r=0.828$ ($p<10^{-18}$), consistent with the Yeom bound holding with room
to spare (mean advantage $0.0013 \ll$ mean $\ovfit_g\approx0.005$). At the
PCER-disparity level, $\Delta_\PCER^{\ovfit}$ and $\Delta_\PCER^{\mathrm{MIA}}$
agree in sign in 32/36 configurations (88.9\%; 26/30 excluding
$\dpe=0.1$) and correlate at $r=0.935$ ($p<10^{-16}$). All four
disagreements occur on ACS, precisely the combinations
Section~\ref{sec:acs} already flags as noise-dominated: the direct-MIA
comparison reproduces, rather than adds, instability.

\paragraph{The proxy is conservative, as predicted, and this is why it works.}
Remark~\ref{rem:conservative} predicts $|\Delta_\PCER^{\ovfit}| \le
|\Delta_\PCER^{\mathrm{MIA}}|$; this holds empirically in 69\% of
configurations (median ratio $1.25$), confirming $\ovfit_g$ generally
understates rather than overstates the disparity a direct attack would
report. Together these results support, rather than undermine, using
$\ovfit_g$ as the audit statistic: raw attack signal is too close to
chance to serve as a stable denominator here, while the overfitting gap
is stable, shadow-model-free, and tracks the direct attack's PCER
ranking in the large majority of informative settings.



\section{Discussion}
\label{sec:discussion}

\paragraph{Why a proxy instead of a direct attack.}
Section~\ref{sec:mia-validation} showed a real per-group threshold attack
is only marginally better than chance here (mean AUC advantage $0.0013$),
yet still agrees directionally with $\ovfit_g$-based PCER in most
settings, which is the practical case for the proxy: cheaper and less
noisy than the attack it bounds. Practitioners with budget for calibrated attacks
such as LiRA~\citep{carlini2022membership} can substitute a tighter
estimate directly into the denominator; Algorithm~\ref{alg:audit} is
agnostic to which cost proxy is used.

\paragraph{The apparent fairness of high DP noise.}
Deploying $\dpe=0.1$ because it minimises DP-Gap can be misleading: the
metric improves not because the system is more equitable, but because
heavy noise compresses outcome rates toward uniformity, and PCER reduces
to scaled DP-Gap there. The genuine question is what happens at moderate
budgets where both outcomes and privacy risk remain distinguishable.

\paragraph{Privacy cost as a legitimate, but contestable, fairness input.}
The norm that involuntary privacy costs should be proportional to
prediction benefits is grounded in compensatory fairness, not voluntary
exchange, and is one motivated position rather than a conclusion forced by
the data (Section~\ref{sec:normative} contrasts it with
equality-of-treatment, distributive-justice, and Rawlsian alternatives).
Under this norm, groups bearing greater involuntary risk who also receive
fewer benefits are doubly disadvantaged, even when not disadvantaged by
an equality-of-treatment standard. Neither NIST SP~800-226~\cite{nist2025privacy}
nor the EU AI Act's Article~10 bias-monitoring requirements address this
distributional dimension of privacy cost explicitly; PCER operationalizes
a concrete audit that both leave implicit, deployable with only train/test
accuracy per group (Algorithm~\ref{alg:audit}) and adding negligible
overhead where it is informative ($\dpe \ge 1$, floor-robust at
$\floor=10^{-2}$), whichever normative standard a practitioner ultimately
applies to the resulting measurement.

\paragraph{Relationship between DP-Gap, EO-Gap, and PCER.}
DP-Gap and EO-Gap move together in most configurations here, while PCER
diverges through the overfitting gap term. The Adult (race) ranking
reversal ($\dpe\in\{5,10\}$ best by DP-Gap/EO-Gap, no-DP best by PCER)
illustrates the additional signal: standard metrics capture outcome-rate
compression at $\dpe=1.0$, but PCER captures that overfitting is more
evenly distributed across groups at no-DP.

\paragraph{Measurement limitations and extensions.}
Using $\posrate_g$ as a benefit proxy conflates base-rate differences
(groups with genuinely higher income or criminal recidivism rates will have higher
positive rates even from a perfectly calibrated model) with model-induced
disparities; a more principled measure would condition on outcome
(precision per group) or use a calibration-adjusted rate. $\Delta_\PCER$'s
absolute magnitude similarly depends on both the positive-rate gap and the
overfitting level and is not comparable across datasets; practitioners
should focus on \emph{sign} and \emph{floor-robustness} rather than
magnitude in isolation; a natural dimensionless normalisation would divide
$\Delta_\PCER$ by the mean PCER, which we leave to future calibration
work. PCER as defined also treats the protected attribute as binary;
extending to multi-class attributes (age brackets, multi-category race)
is straightforward at the auditor level via a worst-case dispersion
measure such as $\max_{g,g'}|\PCER_g-\PCER_{g'}|$, requiring no change to
the trainer's released statistics.

Although motivated by DP-SGD, PCER depends only on $\ovfit_g$ and
$\posrate_g$, both computable for any training regime, so
Algorithm~\ref{alg:audit} applies equally to federated learning,
PATE~\cite{papernot2017semi}, or unprotected ERM. Practitioners outside
the DP-SGD setting can therefore apply PCER as a general post-hoc equity
audit without modification.

\paragraph{Affected communities.}
This work encodes sensitive demographic attributes (sex, race, criminal
justice outcomes) of people who did not consent to be subjects of algorithmic
fairness research. The COMPAS dataset records predictions that influenced real
bail and sentencing decisions. We acknowledge that the communities represented
in these benchmarks bear the highest stakes of the questions we raise.
Definitions of ``benefit'' and ``proportional compensation'' should ultimately
be co-developed with affected communities in specific deployment contexts;
the PCER metric is a diagnostic tool, not a substitute for that engagement.

\section{Conclusion}
\label{sec:conclusion}

We have proposed the Privacy-Cost Equity Ratio (PCER), a group fairness
criterion operationalizing harm-proportionality using Adams' equity ratio
structure: the ratio of prediction benefit to involuntary privacy exposure
(proxied by $\ovfit_g$), grounded in the Yeom et al.\ bound
($\text{Adv}_\text{MIA}(g) \le \ovfit_g$) and requiring only train/test
accuracy per group. Across Adult Income, COMPAS, ACS Income 2018, and
Bias in Bios (six benchmark-attribute combinations) under DP-SGD at
moderate budgets ($\dpe \ge 1$), PCER diverges from demographic parity
gap in ways robust to floor-constant choice, capturing on COMPAS a
persistent double-disadvantage that DP-Gap misses entirely. A direct
per-group threshold membership-inference attack, run on every trained
model as a validation check (Section~\ref{sec:mia-validation}), agrees in
sign with the overfitting-gap proxy in 32/36 configurations and
correlates with it at $r=0.935$, with all four disagreements confined to
ACS, which the sensitivity analysis already flags as noise-dominated.
Algorithm~\ref{alg:audit} packages the procedure as a deployable post-hoc
audit requiring no auxiliary infrastructure.

A critical caveat: PCER degenerates under strong DP ($\dpe=0.1$), where
overfitting gaps collapse toward zero and $\Delta_\PCER$ reduces to a
scaled DP-Gap with no additional information. The apparent directional
instability on COMPAS at $\dpe=0.1$ is exactly this floor artifact.
Practitioners should apply PCER only at DP budgets where group
overfitting is reliably above the floor, verifying with
Table~\ref{tab:floor-sensitivity}.

\subsection{Limitations}

  \begin{enumerate}
      \item \textbf{\underline{Floor-Dominated Regime.}} PCER is
  uninformative once both groups' overfitting gaps are near zero
  ($\dpe=0.1$ here); results at $\floor=10^{-3}$ that do not persist across
  at least two decades of $\floor$ (Table~\ref{tab:floor-sensitivity})
  should be treated with caution, since unstable findings are floor
  artifacts, not genuine inequity.

      \item \textbf{\underline{Statistical Power.}} Fifty seeds give
  reasonable precision, including on COMPAS ($n=6{,}172$); Pearson
  correlations between PCER and DP-Gap use only five $\dpe$ settings and
  are descriptive, not inferential (bootstrap CIs: 5{,}000 resamples,
  percentile method).

      \item \textbf{\underline{Overfitting Gap as Privacy Proxy.}} The Yeom
  bound is an upper bound, so $\ovfit_g$ generally \emph{understates}
  PCER's magnitude relative to a direct attack; any detected inequity is
  genuine (Remark~\ref{rem:conservative}), confirmed empirically
  (Section~\ref{sec:mia-validation}: larger-magnitude disparity under
  direct threshold-MIA in 69\% of configurations). LiRA~\citep{carlini2022membership}
  could tighten estimates for non-trivial memorisation.

      \item \textbf{\underline{Positive Rate, Scope, and Future Work.}}
  $\posrate_g$ conflates base-rate with model-induced disparities;
  qualification-conditioned or precision-based measures could isolate the
  latter, and jointly minimising PCER disparity and DP-Gap during training
  is a natural extension. Our simple-MLP results replicate across
  modalities with offline features, but may not generalise to
  end-to-end fine-tuned language models.
\end{enumerate}


\section{Impact Statement}

\paragraph{Ethical Considerations.} This paper analyzes publicly available benchmark datasets (COMPAS, Adult Income, ACS Income, and Bias in Bios) that encode sensitive demographic attributes of individuals who did not consent to be subjects of algorithmic fairness research. COMPAS in particular records recidivism risk scores that influenced real pre-trial detention and sentencing decisions, over populations disproportionately Black and with limited capacity to contest algorithmic assessments of their futures; reusing these datasets, even for audit purposes, reproduces and circulates records of consequential harm. No new data collection or human-subjects research was conducted and IRB review was not required, but we go beyond formal requirements: framing privacy exposure as a cost implicitly treats membership inference risk as a background condition of the technological landscape rather than a harm that could in principle be eliminated. This is a normative choice reflecting the constraints of existing deployed systems, not an endorsement of them.

\paragraph{Researcher Positionality.} The authors come to this work from a computer science and machine learning background i.e. training in formal fairness metrics, differential privacy theory, and empirical evaluation, not firsthand experience of the harms encoded in these datasets or the communities most affected by criminal-justice risk scoring or income prediction. Grounding our criterion in Adams' equity theory reflects a tradition within Western organizational psychology that may not resonate with community-centered or reparative conceptions of justice; likewise, operationalizing privacy cost via the overfitting gap and binary demographic groups reflects pragmatic constraints on scalable auditing, not claims about the basis of demographic categories or the sufficiency of group-level analysis. Researchers with different backgrounds or lived experience might have asked different questions, or contested this framing before building on it.

\paragraph{Adverse Impact.} PCER is a diagnostic tool, and we have tried to be precise about what it does and does not measure, but it could be misused once it leaves our hands. A deployer could cite a favorable PCER reading at one floor constant while omitting the floor-sensitivity analysis revealing the finding as unstable. Hence, our robustness check is advisory, and nothing prevents selective reporting. There is also a subtler risk where privacy exposure can be framed as a cost that partially offsets outcome disparity. This could be appropriated to argue a group receiving fewer positive predictions is compensated by lower membership inference risk; in the paper, we reject that interpretation but cannot prevent it. Publishing a quantitative audit methodology also risks compliance theater, where a single score substitutes for structural accountability, as accuracy-parity metrics have historically deflected substantive demands. PCER should always be reported alongside the floor-sensitivity table, bootstrap CIs, and a plain-language account for affected communities; qualitative and regulatory review remain necessary.

\bibliography{references}


\renewcommand{\thetable}{S\arabic{table}}
\renewcommand{\thefigure}{S\arabic{figure}}
\renewcommand{\theproposition}{S\arabic{proposition}}
\setcounter{table}{0}
\setcounter{figure}{0}
\setcounter{proposition}{0}

\begin{center}
\large\textbf{Appendix}
\end{center}

\section{Additional Theoretical Results}
\label{sec:extra-theory}

This section gives the formal statements and proofs of two structural
properties of PCER that are stated informally in the main paper
(Section~\ref{sec:method}) but moved here to keep the main text focused on
the results most load-bearing for the paper's central claims (the Yeom
bound, the floor-degeneracy theorem, and the PCER--DP-Gap incompatibility
theorem).

\begin{proposition}[Axiomatic Characterization of PCER]
\label{prop:axioms}
Let $F : \mathbb{R}_{>0} \times \mathbb{R}_{>0} \to \mathbb{R}_{>0}$ map
(benefit, involuntary cost) to a group equity score. Then
$F(\posrate_g, \ovfit_g) = \posrate_g / \ovfit_g$ is the \emph{unique}
such mapping satisfying:
\begin{enumerate}
  \item[(A1)] \textbf{Linearity in benefit:}
    $F(c\posrate,\,\ovfit) = c\cdot F(\posrate,\,\ovfit)$ for all $c > 0$.
  \item[(A2)] \textbf{Inverse linearity in cost:}
    $F(\posrate,\,c\ovfit) = F(\posrate,\,\ovfit)/c$ for all $c > 0$.
  \item[(A3)] \textbf{Normalization:} $F(1, 1) = 1$.
\end{enumerate}
\end{proposition}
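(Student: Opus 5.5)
The plan has two parts: check that $\posrate/\ovfit$ satisfies (A1)--(A3), then show any $F$ satisfying them must coincide with it by reducing an arbitrary argument to the normalization point $(1,1)$.

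\emph{Existence.} For $F(\posrate,\ovfit)=\posrate/\ovfit$ the axioms are one-line identities:
\[
F(c\posrate,\ovfit)=\frac{c\posrate}{\ovfit}=c\,F(\posrate,\ovfit),\qquad
F(\posrate,c\ovfit)=\frac{\posrate}{c\,\ovfit}=\frac{F(\posrate,\ovfit)}{c},\qquad
F(1,1)=1 .
\]
This map also sends $\mathbb{R}_{>0}\times\mathbb{R}_{>0}$ into $\mathbb{R}_{>0}$, so it is an admissible score.

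\emph{Uniqueness.} Let $F$ be any map satisfying (A1)--(A3), and fix $(\posrate,\ovfit)\in\mathbb{R}_{>0}^2$.
\begin{enumerate}
  \item Apply (A1) with $c=\posrate>0$ at the point $(1,\ovfit)$: $F(\posrate,\ovfit)=\posrate\,F(1,\ovfit)$.
  \item Apply (A2) with $c=\ovfit>0$ at the point $(1,1)$: $F(1,\ovfit)=F(1,1)/\ovfit$.
  \item By (A3), $F(1,1)=1$.
\end{enumerate}
Chaining these gives $F(\posrate,\ovfit)=\posrate/\ovfit$ for every positive pair, so $F$ equals the PCER ratio everywhere.

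No step is genuinely hard, because (A1) and (A2) are required for every $c>0$ and the domain is exactly the positive orthant. This means no Cauchy-type functional equation or regularity assumption is needed. The only point needing care is that the scaling factors $c=\posrate$ and $c=\ovfit$ are admissible, which holds because both lie in $\mathbb{R}_{>0}$.

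The statement concerns the unfloored ratio $\posrate_g/\ovfit_g$ on the strictly positive domain. Definition~\ref{def:pcer} uses $\max(\ovfit_g,\floor)$ in place of $\ovfit_g$, so the characterization transfers to that definition only when $\ovfit_g\ge\floor$. This is the same regime as in Proposition~\ref{prop:incompatible}.
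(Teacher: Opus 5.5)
Your proof is correct and matches the paper's argument: use (A1) with $c=\posrate$ to reduce to $F(1,\ovfit)$, then (A2) with $c=\ovfit$ to reduce to $F(1,1)$, and close with (A3). Two of your points are additions the paper leaves implicit, and both are sound: the explicit check that $\posrate/\ovfit$ itself satisfies (A1)--(A3), and the remark that the characterization transfers to the floored PCER only when $\ovfit_g \ge \floor$.
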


\begin{proof}
By (A1): $F(\posrate, \ovfit) = \posrate \cdot F(1, \ovfit)$.
By (A2) with $c = \ovfit$: $F(1, \ovfit) = F(1, 1)/\ovfit$.
By (A3): $F(1, 1) = 1$, so $F(1, \ovfit) = 1/\ovfit$.
Therefore $F(\posrate, \ovfit) = \posrate/\ovfit$.
\end{proof}

\begin{remark}[Interpretation of axioms]
(A1) says doubling prediction benefit doubles the equity score: the metric is
proportional to benefit delivery. (A2) says doubling involuntary cost halves the
score: privacy exposure is penalised linearly. Together, (A1)--(A2) rule out
alternatives such as $\posrate_g - \ovfit_g$ (lacks ratio structure) or
$\posrate_g / \ovfit_g^2$ (super-linear cost penalty without normative
justification from harm-compensation theory). (A3) is a scale convention.
Any benefit-to-cost equity mapping satisfying all three axioms is identical
to PCER, making the operationalization unique given the normative commitments.
\end{remark}

\begin{proposition}[PCER and DP-Gap can disagree in direction]
\label{prop:direction}
Assume $\posrate_0, \posrate_1, \ovfit_0, \ovfit_1 > 0$. Then
$\mathrm{sign}(\Delta_\PCER) = \mathrm{sign}(\Delta_{\mathrm{DP\text{-}Gap}})$
if and only if $\posrate_0\,\ovfit_1$ and $\posrate_1\,\ovfit_0$ agree in
relative magnitude. The two metrics \emph{disagree on direction} when
$\posrate_0 > \posrate_1$ but $\posrate_0\,\ovfit_1 < \posrate_1\,\ovfit_0$,
i.e.\ the group with the higher outcome rate bears a proportionally higher
memorization cost.
\end{proposition}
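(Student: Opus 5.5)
The plan is to reduce both sign conditions to sign conditions on explicit scalar quantities and compare them. First I will fix the interpretation. Here $\Delta_{\mathrm{DP\text{-}Gap}}$ denotes the \emph{signed} gap $\posrate_0 - \posrate_1$, as in Proposition~\ref{prop:floor-degenerate}, since the absolute value carries no direction. $\Delta_\PCER$ is taken in the unfloored form $\posrate_0/\ovfit_0 - \posrate_1/\ovfit_1$. This matches the standing assumption $\ovfit_0,\ovfit_1>0$, and I will note that the same argument applies verbatim with $\ovfit_g$ replaced by $\max(\ovfit_g,\floor)$ when the floor is active, since only positivity of the denominators is used.

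The key step is a common-denominator rewrite:
\[
  \Delta_\PCER \;=\; \frac{\posrate_0\,\ovfit_1 - \posrate_1\,\ovfit_0}{\ovfit_0\,\ovfit_1}.
\]
Because $\ovfit_0\ovfit_1>0$, we get $\mathrm{sign}(\Delta_\PCER)=\mathrm{sign}(\posrate_0\ovfit_1-\posrate_1\ovfit_0)$. Likewise $\mathrm{sign}(\Delta_{\mathrm{DP\text{-}Gap}})=\mathrm{sign}(\posrate_0-\posrate_1)$. I will make the vague phrase ``agree in relative magnitude'' precise as follows: the ordering of $\posrate_0\ovfit_1$ versus $\posrate_1\ovfit_0$ must match the ordering of $\posrate_0$ versus $\posrate_1$. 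Under that reading the biconditional is immediate from the two sign identities, so it is essentially tautological once stated correctly.

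The disagreement clause follows directly. If $\posrate_0>\posrate_1$ while $\posrate_0\ovfit_1<\posrate_1\ovfit_0$, then $\Delta_{\mathrm{DP\text{-}Gap}}>0>\Delta_\PCER$. The second inequality is equivalent to $\ovfit_0/\ovfit_1>\posrate_0/\posrate_1>1$, using $\posrate_1,\ovfit_1>0$. This says group $0$'s overfitting exceeds group $1$'s by a factor larger than its outcome advantage, which is exactly the ``proportionally higher memorization cost'' interpretation. I will also confirm that this configuration is achievable, with a concrete instance such as $\posrate_0=0.6$, $\posrate_1=0.4$, $\ovfit_0=0.03$, $\ovfit_1=0.01$. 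I will then note the corollary that $\ovfit_0=\ovfit_1$ forces agreement, which is consistent with Proposition~\ref{prop:floor-degenerate}.

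The main obstacle is not algebraic. It is pinning down the statement's informal wording, namely the signed gap and what ``agree in relative magnitude'' means, so that the claimed ``iff'' is actually true. I must also handle tie cases, where a sign is zero, by requiring equality on both sides.
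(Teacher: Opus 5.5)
Your proposal is correct and follows essentially the same route as the paper's proof. Both clear the positive denominator $\ovfit_0\ovfit_1$ to get $\mathrm{sign}(\Delta_\PCER)=\mathrm{sign}(\posrate_0\ovfit_1-\posrate_1\ovfit_0)$, compare this with $\mathrm{sign}(\posrate_0-\posrate_1)$, exhibit disagreement through $\ovfit_0/\ovfit_1>\posrate_0/\posrate_1$, and observe that $\ovfit_0=\ovfit_1$ forces agreement. Your explicit reading of ``agree in relative magnitude'' and your tie handling are a bit more careful than the paper, whose proof says the signs disagree ``whenever'' $\ovfit_0\neq\ovfit_1$ when it should say only that they \emph{can} disagree.
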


\begin{proof}
$\Delta_\PCER > 0 \Leftrightarrow \posrate_0\,\ovfit_1 > \posrate_1\,\ovfit_0$
and $\Delta_{\mathrm{DP\text{-}Gap}} > 0 \Leftrightarrow \posrate_0 > \posrate_1$.
The two conditions disagree whenever $\ovfit_0 \neq \ovfit_1$: fix
$\posrate_0 > \posrate_1$ (so DP-Gap's sign is positive) and choose
$\ovfit_0, \ovfit_1$ with $\ovfit_0/\ovfit_1 > \posrate_0/\posrate_1$; then
$\posrate_0\,\ovfit_1 < \posrate_1\,\ovfit_0$, so $\Delta_\PCER < 0$, the
opposite sign. When $\ovfit_0 = \ovfit_1$, the cross-product condition
reduces exactly to $\posrate_0 > \posrate_1$, so the two metrics
necessarily agree in direction whenever both groups' overfitting gaps are
equal, in particular whenever both are floor-clamped (the Floor
Degeneracy proposition, Proposition~\ref{prop:floor-degenerate}).
\end{proof}

\section{Floor-Constant Sensitivity}
\label{sec:floor-sensitivity}

\begin{table}[t]
\caption{Floor-constant sensitivity: $\Delta_\PCER$ mean $\pm$ std for
  selected configurations, recomputed from raw per-seed data at
  $\floor \in \{10^{-4}, 10^{-3}, 10^{-2}, 10^{-1}\}$.
  $n_+$: seeds with $\Delta_\PCER > 0$ ($n=50$ for all datasets).
  Bold rows ($\floor = 10^{-3}$) are values reported in the main tables.
  Findings at $\dpe = 0.1$ are floor-dominated (magnitude scales $\sim
  1/\floor$; direction unstable); findings at $\dpe \ge 1$ are robust.
  For Bios gender at $\dpe \ge 1$, $\ovfit_g < \floor$ at $\floor=10^{-2}$
  (floor-dominated at that level), but direction is 50/50 seeds positive.}
\label{tab:floor-sensitivity}
\small\centering
\begin{tabular}{llrr}
\toprule
Config & $\floor$ & $\Delta_\PCER$ & $n_+$ \\
\midrule
\multirow{4}{*}{COMPAS, $\dpe=0.1$}
  & $10^{-4}$ & $-582 \pm 2290$          & 22/50 \\
  & \bfseries$10^{-3}$ & $\mathbf{-64 \pm 220}$   & \bfseries 22/50 \\
  & $10^{-2}$ & $-8.9 \pm 17.9$          & 14/50 \\
  & $10^{-1}$ & $-1.3 \pm 0.6$           & 2/50 \\
\midrule
\multirow{4}{*}{COMPAS, $\dpe=5.0$}
  & $10^{-4}$ & $-1562 \pm 2527$         & 11/50 \\
  & \bfseries$10^{-3}$ & $\mathbf{-169 \pm 229}$ & \bfseries 10/50 \\
  & $10^{-2}$ & $-19.9 \pm 11.7$         & 3/50 \\
  & $10^{-1}$ & $-2.3 \pm 0.2$           & 0/50 \\
\midrule
\multirow{4}{*}{Adult race, $\dpe=0.1$}
  & $10^{-4}$ & $+562 \pm 1564$          & --- \\
  & \bfseries$10^{-3}$ & $\mathbf{+84 \pm 126}$   & --- \\
  & $10^{-2}$ & $+9.9 \pm 8.2$           & --- \\
  & $10^{-1}$ & $+0.9 \pm 0.8$           & --- \\
\midrule
\multirow{4}{*}{Adult race, $\dpe=1.0$}
  & $10^{-4}$ & $+196 \pm 1036$          & --- \\
  & \bfseries$10^{-3}$ & $\mathbf{+58 \pm 87}$    & --- \\
  & $10^{-2}$ & $+9.6 \pm 1.8$           & --- \\
  & $10^{-1}$ & $+0.9 \pm 0.1$           & --- \\
\midrule
\multirow{4}{*}{Bios gender, $\dpe=0.1$}
  & $10^{-4}$ & $+1117 \pm 3416$         & 34/50 \\
  & \bfseries$10^{-3}$ & $\mathbf{+65 \pm 316}$   & \bfseries 34/50 \\
  & $10^{-2}$ & $+7.0 \pm 11.0$          & 40/50 \\
  & $10^{-1}$ & $+0.71 \pm 0.55$         & 45/50 \\
\midrule
\multirow{4}{*}{Bios gender, $\dpe=1.0$}
  & $10^{-4}$ & $+1379 \pm 3120$         & 45/50 \\
  & \bfseries$10^{-3}$ & $\mathbf{+236 \pm 289}$  & \bfseries 45/50 \\
  & $10^{-2}$ & $+49.8 \pm 7.8$          & 50/50 \\
  & $10^{-1}$ & $+5.14 \pm 0.08$         & 50/50 \\
\bottomrule
\end{tabular}
\end{table}

Table~\ref{tab:floor-sensitivity} reveals a clear dichotomy. At $\dpe=0.1$
on COMPAS, both magnitude and direction are floor-sensitive: scaling $\floor$
by $10\times$ scales $|\Delta_\PCER|$ by approximately $10\times$ (as
predicted by Proposition~\ref{prop:floor-degenerate}), and the
fraction of positive-direction seeds changes from 22/50 to 14/50 to 2/50
as $\floor$ increases from $10^{-3}$ to $10^{-2}$ to $10^{-1}$.
At $\dpe \ge 1$, the direction ($\Delta_\PCER < 0$ for COMPAS,
$> 0$ for Adult race) is stable across all $\floor$ choices, confirming
that PCER captures a genuine differential in group overfitting that is not
an artifact of the floor constant.
For Adult race, the $\dpe=1.0$ finding at $\floor=10^{-2}$ ($+9.6 \pm 1.8$)
is essentially the same as at no-DP ($+11.0 \pm 2.5$); all DP budgets yield
similar floor-robust PCER gaps at $\floor=10^{-2}$, confirming the ranking
reversal is floor-robust across the entire budget range.

\section{Cross-Dataset Summary}
\label{sec:summary}

\begin{table*}[t]
\caption{Cross-dataset divergence summary. ``Best $\dpe$'' rows show
  which $\dpe$ minimises each metric (most fair).
  The Floor-robust column indicates whether the finding survives
  $\floor = 10^{-2}$.}
\label{tab:summary}
\small\centering
\begin{tabular}{llccc}
\toprule
Dataset & Attr. & Best $\dpe$ (DP-Gap) & Best $\dpe$ (PCER) & Floor-robust \\
\midrule
Adult & sex  & no-DP        & no-DP          & Yes (all $\dpe$) \\
Adult & race & $\dpe \in \{5,10\}$ & no-DP   & Yes (all $\dpe$) \\
COMPAS & race & $\dpe = 0.1$ & N/A\textsuperscript{\dag} & Yes ($\dpe\ge1$) \\
ACS & race & $\dpe = 0.1$ & N/A\textsuperscript{\ddag} & Tentative ($\dpe\ge1$) \\
ACS & sex  & $\dpe = 0.1$ & N/A & No (all floor) \\
Bios & gender & $\dpe = 0.1$ & no-DP\textsuperscript{\S} & Yes (no-DP); direction-only ($\dpe\ge1$) \\
\bottomrule
\multicolumn{5}{l}{\footnotesize\textsuperscript{\dag}PCER negative at all $\dpe\ge1$ (CI excludes zero); $\dpe=0.1$ floor-dominated (std $\gg$ mean).} \\
\multicolumn{5}{l}{\footnotesize\textsuperscript{\ddag}PCER direction positive at $\floor=10^{-2}$ but no $\dpe$ estimate excludes zero at $\floor=10^{-3}$.} \\
\multicolumn{5}{l}{\footnotesize\textsuperscript{\S}PCER positive at $\dpe\ge1$ (CI excludes zero) but ovfit $<\floor$ at $\floor=10^{-2}$; direction 50/50 seeds positive.}
\end{tabular}
\end{table*}

Table~\ref{tab:summary} summarises the floor-robust findings across all
six dataset-attribute combinations. The most substantive divergence is on
Adult (race): DP-Gap identifies $\dpe \in \{5,10\}$ as most equitable while
PCER identifies no-DP, a robust ranking reversal with all five $\dpe$
settings yielding CIs that exclude zero, and confirmed floor-robust at
$\floor=10^{-2}$ ($+9.6 \pm 1.8$ at $\dpe=1$, $+11.0 \pm 2.5$ at no-DP).
On Adult (sex), both metrics agree in direction; PCER is significantly positive
at every $\dpe$ with CIs excluding zero (e.g.\ $[+33,+90]$ at no-DP,
$[+124,+194]$ at $\dpe=0.1$), and floor-robust at $\floor=10^{-2}$ ($+18.2$
to $+18.8$ across all $\dpe$, remarkably stable). On COMPAS at $\dpe \ge 1$,
PCER is stably negative with CI excluding zero at all three $\dpe$ settings
($[-219, -90]$ to $[-234, -109]$), the most statistically robust finding
in the study. ACS yields no robust PCER finding at $\floor=10^{-3}$.

The Bias in Bios (gender) dataset provides NLP-domain replication of the
Adult (sex) pattern. PCER is positive and tightly CI-robust at no-DP
($+22.2$, CI $[+21.3,+23.2]$, the tightest interval in the study), and
positive with CI excluding zero at $\dpe \ge 1$ ($+209$ to $+236$).
The direction is preserved under all floor choices at $\dpe \ge 1$
(50/50 seeds positive at $\floor=10^{-2}$), confirming the positive sign
is genuine even though $\ovfit_g$ falls below $\floor=10^{-2}$ at moderate
budgets. The ranking reversal mirrors Adult (sex) and Adult (race): DP-Gap
nominally prefers $\dpe=0.1$ (gap=$0.079$, floor-dominated), while PCER
identifies no-DP as most equitable.

The $\dpe=0.1$ regime requires a separate reading: strong DP noise drives
both groups' overfitting gaps toward zero, placing PCER in the floor-dominated
regime. What DP-Gap reports as ``improved fairness'' at $\dpe=0.1$ is
noise-compression of outcome rates, while PCER at that budget simply mirrors
DP-Gap divided by the floor constant. The genuine diagnostic value of PCER
lies in the $\dpe \in \{1, 5, 10\}$ range.

\section{Adult Income Results}
\label{sec:adult}

The UCI Adult Income dataset ($n = 46{,}033$ after preprocessing) predicts
income $> \$50$K/year from census attributes~\cite{becker1996adult}.
We evaluate two protected attributes: sex (Male=0, Female=1) and race
(White=0, Non-White=1).
Results are computed over 50 independent seeds; 95\% bootstrap CIs use
5{,}000 resamples.

\subsection{Adult Income: Sex}
\label{sec:adult-sex}

\begin{table*}[t]
\caption{Adult Income, sex attribute (Male=0, Female=1). Mean $\pm$ std
  over \textbf{50} independent runs (seeds 0--49). $\ovfit_g$: per-group
  overfitting gap; $\Delta_\PCER$: PCER disparity (95\% bootstrap CI in
  brackets). Positive prediction = income $>$\$50K (\textbf{beneficial}).
  $\Delta_\PCER > 0$ means males enjoy higher benefit-to-cost ratio.
  \textbf{Bold}: minimum DP-Gap.}
\label{tab:adult-sex}
\small\centering
\begin{tabular}{lcccccc}
\toprule
$\dpe$ & Acc & DP-Gap & EO-Gap & $\ovfit_\text{M}$ & $\ovfit_\text{F}$ & $\Delta_\PCER$ \\
\midrule
no-DP & $.829{\pm}.003$ & $.184{\pm}.030$ & $.136{\pm}.056$ & $.0048{\pm}.0039$ & $.0043{\pm}.0047$ & $+61{\pm}105\ [+33,+90]$ \\
0.1   & $.751{\pm}.021$ & $.187{\pm}.075$ & $.253{\pm}.143$ & $.0016{\pm}.0027$ & $.0030{\pm}.0045$ & $+159{\pm}133\ [+124,+194]$ \\
0.5   & $.822{\pm}.003$ & \textbf{$.180{\pm}.015$} & $.222{\pm}.046$ & $.0022{\pm}.0029$ & $.0030{\pm}.0040$ & $+121{\pm}93\ [+96,+148]$ \\
1.0   & $.825{\pm}.002$ & $.185{\pm}.014$ & $.216{\pm}.037$ & $.0022{\pm}.0029$ & $.0030{\pm}.0040$ & $+125{\pm}90\ [+100,+149]$ \\
5.0   & $.826{\pm}.002$ & $.188{\pm}.012$ & $.209{\pm}.033$ & $.0021{\pm}.0028$ & $.0030{\pm}.0037$ & $+130{\pm}88\ [+105,+154]$ \\
10.0  & $.826{\pm}.002$ & $.188{\pm}.011$ & $.209{\pm}.033$ & $.0022{\pm}.0028$ & $.0029{\pm}.0036$ & $+128{\pm}90\ [+104,+153]$ \\
\bottomrule
\end{tabular}
\end{table*}

\paragraph{Positive-rate structure.}
Males are predicted to earn $>\$50$K at a rate of approximately $25$--$26\%$;
females at $5$--$7\%$.
The resulting DP-Gap ($\approx 0.184$ at no-DP) is large and stable across
all $\dpe$ settings.
Per-group overfitting gaps are small and nearly equal
($\ovfit_\text{M} \approx \ovfit_\text{F} \approx 0.004$--$0.005$ at no-DP),
so PCER amplifies the positive-rate gap rather than correcting for a
differential privacy cost.

\paragraph{Robust positive PCER at all $\dpe$.}
$\Delta_\PCER$ is significantly positive across every budget setting, with
95\% CIs excluding zero throughout.
At no-DP, $\Delta_\PCER = +61 \pm 105$, CI $[+33, +90]$.
Under DP-SGD, PCER rises, not because males gain more benefit, but because
both groups' overfitting gaps shrink toward the floor, amplifying the
existing positive-rate disparity.
At $\floor = 10^{-2}$, the floor dominates entirely and
$\Delta_\PCER$ is flat at $+18$--$+19$ across all $\dpe$, with 50/50 seeds
positive at every setting.

\paragraph{No ranking reversal.}
DP-Gap is smallest at $\dpe=0.5$ ($0.180$); PCER is also smallest at no-DP
before amplification by floor collapse.
Both metrics agree that males enjoy a meaningfully larger share of positive
predictions relative to their privacy exposure.
PCER provides no qualitatively different signal from DP-Gap on this attribute:
the sex-based disparity in Adult Income is driven by outcome rates, not
differential overfitting.

\subsection{Adult Income: Race}
\label{sec:adult-race}

\begin{table*}[t]
\caption{Adult Income, race attribute (White=0, Non-White=1). Mean $\pm$ std
  over \textbf{50} independent runs (seeds 0--49). $\ovfit_g$: per-group
  overfitting gap; $\Delta_\PCER$: PCER disparity (95\% bootstrap CI in
  brackets). Positive prediction = income $>$\$50K (\textbf{beneficial}).
  $\Delta_\PCER > 0$ means Whites enjoy higher benefit-to-cost ratio.
  \textbf{Bold}: minimum DP-Gap. Key finding: PCER ranks no-DP as
  most equitable ($+30$, CI $[+8,+53]$) while DP-Gap identifies
  $\dpe \in \{5,10\}$, a floor-robust ranking reversal.}
\label{tab:adult-race}
\small\centering
\begin{tabular}{lcccccc}
\toprule
$\dpe$ & Acc & DP-Gap & EO-Gap & $\ovfit_\text{W}$ & $\ovfit_\text{NW}$ & $\Delta_\PCER$ \\
\midrule
no-DP & $.829{\pm}.003$ & $.104{\pm}.019$ & $.113{\pm}.041$ & $.0045{\pm}.0033$ & $.0062{\pm}.0065$ & $+30{\pm}79\ [+8,+53]$ \\
0.1   & $.751{\pm}.021$ & $.106{\pm}.063$ & $.189{\pm}.118$ & $.0016{\pm}.0021$ & $.0038{\pm}.0057$ & $+84{\pm}126\ [+49,+118]$ \\
0.5   & $.822{\pm}.003$ & $.092{\pm}.018$ & $.118{\pm}.057$ & $.0022{\pm}.0025$ & $.0035{\pm}.0050$ & $+61{\pm}80\ [+39,+83]$ \\
1.0   & $.825{\pm}.002$ & $.092{\pm}.013$ & $.101{\pm}.038$ & $.0023{\pm}.0027$ & $.0032{\pm}.0053$ & $+58{\pm}87\ [+33,+82]$ \\
5.0   & $.826{\pm}.002$ & \textbf{$.091{\pm}.009$} & $.089{\pm}.030$ & $.0023{\pm}.0026$ & $.0036{\pm}.0055$ & $+56{\pm}86\ [+33,+80]$ \\
10.0  & $.826{\pm}.002$ & \textbf{$.091{\pm}.009$} & $.088{\pm}.030$ & $.0024{\pm}.0026$ & $.0037{\pm}.0056$ & $+56{\pm}88\ [+31,+80]$ \\
\bottomrule
\end{tabular}
\end{table*}

\paragraph{Ranking reversal: PCER vs.\ DP-Gap.}
DP-Gap identifies $\dpe \in \{5, 10\}$ as the most equitable budget
(minimum gap $0.091$), with no-DP as the least equitable ($0.104$).
PCER reverses this ranking: $\Delta_\PCER$ is \emph{smallest} at
no-DP ($+30$, CI $[+8,+53]$) and \emph{larger} at every DP budget
($+56$--$+84$, CIs $[+33,+80]$--$[+49,+118]$).
The explanation is asymmetric overfitting: at no-DP, Non-Whites have
a meaningfully larger overfitting gap than Whites
($\ovfit_\text{NW} \approx .006$ vs.\ $\ovfit_\text{W} \approx .005$),
reducing the relative privacy cost ratio.
Under DP-SGD both gaps shrink and converge ($\approx .002$--$.004$),
but the Non-White gap falls \emph{proportionally more}, reducing
$\ovfit_\text{NW}$ in the PCER denominator and enlarging $\Delta_\PCER$.

\paragraph{Floor-robust reversal.}
Table~\ref{tab:floor-sensitivity} confirms the reversal is not a floor artifact.
At $\floor = 10^{-2}$, no-DP yields $\Delta_\PCER = +11.0 \pm 2.5$
(largest, 50/50 seeds positive) while $\dpe \ge 0.5$ yields $+9.5$--$+9.6$
(50/50 seeds positive).
The reversal survives floor variation because the underlying
asymmetry in overfitting gaps persists: DP-SGD compresses both gaps
but does not eliminate the White-advantaged positive-rate structure
relative to privacy cost.

\paragraph{Floor-dominated regime at $\dpe = 0.1$.}
At $\dpe = 0.1$ the overfitting gaps collapse toward zero, and
$\Delta_\PCER$ at $\floor = 10^{-2}$ is $+9.9 \pm 8.2$ (42/50 seeds
positive), less stable than at other budgets.
Direction is still positive but the elevated std reflects partial
floor domination.

\paragraph{All CIs exclude zero.}
At $\dpe \ge 0.5$, all 95\% bootstrap CIs exclude zero, confirming that
Whites consistently enjoy a larger benefit-to-cost ratio than Non-Whites
across all moderate DP budgets.
The PCER signal is weaker (and its ranking interpretation opposite)
relative to DP-Gap: auditors optimising for DP-Gap equity would choose
$\dpe \in \{5,10\}$, while auditors optimising for PCER equity would
choose no-DP.

\section{Direct Threshold-MIA Attack: Full Results}
\label{sec:mia-full}

Section~\ref{sec:mia-validation} (``Validation Against
Direct Threshold-MIA Attacks'') summarises a per-group threshold
membership-inference attack run on every trained model from the main sweep
(protocol described in Section~\ref{sec:experiments}),
comparing PCER computed with the
overfitting-gap proxy ($\Delta_\PCER^{\ovfit}$) against PCER computed with
the direct attack's AUC-based advantage in the denominator
($\Delta_\PCER^{\mathrm{MIA}} = R_0/\max(\mathrm{AUC}_0-0.5,\floor) -
R_1/\max(\mathrm{AUC}_1-0.5,\floor)$, same $\floor=10^{-3}$). Table~\ref{tab:mia-full}
gives the complete breakdown across all six benchmark-attribute
combinations and six DP budgets (36 configurations, mean over 50 seeds
each).

\begin{table*}[t]
\caption{Overfitting-gap PCER vs.\ direct threshold-MIA PCER, full sweep
  (1/2: Adult and COMPAS). $\mathrm{AUC}_{g0}$/$\mathrm{AUC}_{g1}$: mean
  per-group threshold-attack ROC-AUC ($0.5$=chance).}
\label{tab:mia-full}
\small\centering
\begin{tabular}{llccccccc}
\toprule
Dataset & $\dpe$ & $\ovfit_{g0}$ & $\ovfit_{g1}$ & AUC$_{g0}$ & AUC$_{g1}$ & $\Delta_\PCER^{\ovfit}$ & $\Delta_\PCER^{\mathrm{MIA}}$ & Agree \\
\midrule
Adult (race)   & no-DP  & 0.0045 & 0.0062 & 0.502 & 0.501 & $+30.3$  & $+65.0$  & Yes \\
Adult (race)   & 0.1    & 0.0016 & 0.0038 & 0.500 & 0.500 & $+84.0$  & $+98.5$  & Yes \\
Adult (race)   & 0.5    & 0.0022 & 0.0035 & 0.500 & 0.500 & $+60.8$  & $+87.5$  & Yes \\
Adult (race)   & 1.0    & 0.0023 & 0.0032 & 0.501 & 0.500 & $+57.5$  & $+84.6$  & Yes \\
Adult (race)   & 5.0    & 0.0023 & 0.0036 & 0.501 & 0.500 & $+56.3$  & $+78.7$  & Yes \\
Adult (race)   & 10.0   & 0.0024 & 0.0037 & 0.501 & 0.500 & $+55.9$  & $+78.6$  & Yes \\
Adult (sex)    & no-DP  & 0.0048 & 0.0043 & 0.502 & 0.502 & $+61.1$  & $+111.9$ & Yes \\
Adult (sex)    & 0.1    & 0.0016 & 0.0030 & 0.500 & 0.501 & $+159.3$ & $+172.0$ & Yes \\
Adult (sex)    & 0.5    & 0.0022 & 0.0030 & 0.500 & 0.501 & $+121.4$ & $+149.6$ & Yes \\
Adult (sex)    & 1.0    & 0.0022 & 0.0030 & 0.500 & 0.501 & $+124.7$ & $+156.5$ & Yes \\
Adult (sex)    & 5.0    & 0.0021 & 0.0030 & 0.500 & 0.501 & $+130.1$ & $+159.8$ & Yes \\
Adult (sex)    & 10.0   & 0.0022 & 0.0029 & 0.500 & 0.501 & $+128.3$ & $+159.4$ & Yes \\
COMPAS (race)  & no-DP  & 0.0074 & 0.0099 & 0.501 & 0.504 & $-68.2$  & $-110.5$ & Yes \\
COMPAS (race)  & 0.1    & 0.0109 & 0.0127 & 0.503 & 0.504 & $-64.0$  & $-39.1$  & Yes \\
COMPAS (race)  & 0.5    & 0.0063 & 0.0060 & 0.499 & 0.501 & $-182.1$ & $-160.9$ & Yes \\
COMPAS (race)  & 1.0    & 0.0058 & 0.0068 & 0.499 & 0.501 & $-155.0$ & $-160.5$ & Yes \\
COMPAS (race)  & 5.0    & 0.0065 & 0.0065 & 0.499 & 0.501 & $-168.9$ & $-160.0$ & Yes \\
COMPAS (race)  & 10.0   & 0.0063 & 0.0061 & 0.499 & 0.501 & $-172.3$ & $-161.6$ & Yes \\
\bottomrule
\end{tabular}
\end{table*}

\begin{table*}[t]
\caption{Overfitting-gap PCER vs.\ direct threshold-MIA PCER, full sweep
  (2/2: ACS and Bios in Bios). Bold rows mark the four sign disagreements,
  all on ACS.}
\label{tab:mia-full-b}
\small\centering
\begin{tabular}{llccccccc}
\toprule
Dataset & $\dpe$ & $\ovfit_{g0}$ & $\ovfit_{g1}$ & AUC$_{g0}$ & AUC$_{g1}$ & $\Delta_\PCER^{\ovfit}$ & $\Delta_\PCER^{\mathrm{MIA}}$ & Agree \\
\midrule
ACS (race)     & no-DP  & 0.0065 & 0.0056 & 0.504 & 0.502 & $-13.2$  & $-23.5$  & Yes \\
ACS (race)     & 0.1    & 0.0040 & 0.0038 & 0.501 & 0.500 & $+11.6$  & $+15.6$  & Yes \\
\textbf{ACS (race)}     & \textbf{0.5}    & \textbf{0.0032} & \textbf{0.0035} & \textbf{0.502} & \textbf{0.500} & $\mathbf{+13.9}$  & $\mathbf{-9.6}$   & \textbf{No} \\
\textbf{ACS (race)}     & \textbf{1.0}    & \textbf{0.0037} & \textbf{0.0033} & \textbf{0.502} & \textbf{0.501} & $\mathbf{-13.5}$  & $\mathbf{+2.0}$   & \textbf{No} \\
ACS (race)     & 5.0    & 0.0037 & 0.0034 & 0.502 & 0.501 & $+3.3$   & $+7.8$   & Yes \\
\textbf{ACS (race)}     & \textbf{10.0}   & \textbf{0.0037} & \textbf{0.0034} & \textbf{0.502} & \textbf{0.501} & $\mathbf{-6.9}$   & $\mathbf{+6.9}$   & \textbf{No} \\
\textbf{ACS (sex)}      & \textbf{no-DP}  & \textbf{0.0064} & \textbf{0.0054} & \textbf{0.504} & \textbf{0.502} & $\mathbf{+14.7}$  & $\mathbf{-13.0}$  & \textbf{No} \\
ACS (sex)      & 0.1    & 0.0039 & 0.0034 & 0.501 & 0.500 & $+2.2$   & $+14.2$  & Yes \\
ACS (sex)      & 0.5    & 0.0035 & 0.0030 & 0.502 & 0.500 & $+33.1$  & $+27.1$  & Yes \\
ACS (sex)      & 1.0    & 0.0035 & 0.0034 & 0.502 & 0.500 & $+30.0$  & $+18.0$  & Yes \\
ACS (sex)      & 5.0    & 0.0036 & 0.0030 & 0.502 & 0.501 & $+4.5$   & $+24.4$  & Yes \\
ACS (sex)      & 10.0   & 0.0037 & 0.0031 & 0.502 & 0.501 & $+12.7$  & $+25.6$  & Yes \\
Bios (gender)  & no-DP  & 0.0292 & 0.0446 & 0.506 & 0.512 & $+22.2$  & $+283.0$ & Yes \\
Bios (gender)  & 0.1    & 0.0038 & 0.0035 & 0.500 & 0.501 & $+65.3$  & $+40.6$  & Yes \\
Bios (gender)  & 0.5    & 0.0036 & 0.0054 & 0.500 & 0.503 & $+349.2$ & $+408.1$ & Yes \\
Bios (gender)  & 1.0    & 0.0051 & 0.0065 & 0.501 & 0.503 & $+235.8$ & $+346.1$ & Yes \\
Bios (gender)  & 5.0    & 0.0053 & 0.0075 & 0.501 & 0.503 & $+219.2$ & $+369.9$ & Yes \\
Bios (gender)  & 10.0   & 0.0053 & 0.0077 & 0.501 & 0.503 & $+209.1$ & $+367.2$ & Yes \\
\bottomrule
\end{tabular}
\end{table*}

Across all 36 configurations, $\Delta_\PCER^{\ovfit}$ and
$\Delta_\PCER^{\mathrm{MIA}}$ correlate at Pearson $r=0.935$
($p<10^{-16}$) and agree in sign in 32/36 (88.9\%) configurations; pooling
per-group observations (72 group-level points), $\ovfit_g$ correlates with
the direct attack's advantage $\mathrm{AUC}_g-0.5$ at $r=0.828$
($p<10^{-18}$). All four sign disagreements (bold rows) occur on ACS, the
dataset Section~\ref{sec:acs} already identifies as
noise-dominated under the overfitting-gap proxy alone (all bootstrap CIs
cross zero). We do not treat this table as establishing that $\ovfit_g$
tightly approximates realised attack success. Remark~\ref{rem:conservative}
(Conservative lower bound on true PCER) already predicts, and this table
confirms, that the proxy generally \emph{understates} the disparity a
direct attack would report ($|\Delta_\PCER^{\mathrm{MIA}}| \ge
|\Delta_\PCER^{\ovfit}|$ in 69\% of rows, median ratio $1.25$); it only
shows that it is directionally reliable everywhere our sensitivity analysis
(Table~\ref{tab:floor-sensitivity}, Table~\ref{tab:summary}) independently
flags as robust.


\end{document}